\tikzstyle{component}=[rectangle, draw=black, rounded corners, fill=blue!40, drop shadow, text centered, anchor=north, text=white, minimum height=1cm]
\tikzstyle{arrow}=[->, thick]
\pgfplotsset{compat=1.12}
\definecolor{red(munsell)}{rgb}{0.95, 0.0, 0.24}
\definecolor{navyblue}{RGB}{0, 0, 128}
\definecolor{myblue}{RGB}{34,31,217}
\definecolor{mycyan}{gray}{.7}
\definecolor{Gray}{gray}{0.9}
\definecolor{usccardinal}{rgb}{0.6, 0.0, 0.0}
\definecolor{ultramarine}{RGB}{0,32,96}
\definecolor{amber}{rgb}{1.0, 0.49, 0.0}
\newtheorem{theorem}{Theorem}
\newtheorem{definition}{Definition}
\newtheorem{lemma}{Lemma}
\newtcolorbox{quotebox}{colback=gray!10,boxrule=0.4pt,colframe=black,fonttitle=\bfseries,top=1pt,bottom=1pt}
\newenvironment{code-example}
{
\vspace{0.15cm}
\noindent\begin{minipage}{\linewidth}
\begin{center}
\arrayrulecolor{black}
\color{black}
\begin{tabular}{|p{0.95\linewidth}|}
\hline%
\rowcolor{pink!20}%
}
{
\\\hline
\end{tabular}
\end{center}
\end{minipage}
\vspace{-0.2cm}
}
\newcommand{\equalcontrib}{\textsuperscript{\dag}}
\newcommand{\corrauthor}{\textsuperscript{\ddag}}
\begin{document}

%% title
\title{\vspace{-1ex}\LARGE\textbf{Beyond Monotonicity: Revisiting Factorization Principles in Multi-Agent Q-Learning}~\footnote{Accepted as an oral paper at AAAI 2026.}}

\author[1]{\normalsize Tianmeng Hu\thanks{These authors contributed equally.}}
\author[2]{\normalsize Yongzheng Cui\equalcontrib}
\author[2]{\normalsize Rui Tang\equalcontrib}
\author[2]{\normalsize Biao Luo\thanks{Corresponding authors.}}
\author[1]{\normalsize Ke Li\corrauthor}

\affil[1]{\normalsize Department of Computer Science, University of Exeter, U.K.}
\affil[2]{\normalsize School of Automation, Central South University, China.}

\affil[ ]{\normalsize \texttt{th743@exeter.ac.uk, \{yongzhengcui, ruitang02\}@csu.edu.cn, biao.luo@hotmail.com, k.li@exeter.ac.uk}}

\date{}
\maketitle

\vspace{-3ex}
{\normalsize\textbf{Abstract: } }Value decomposition is a central approach in multi-agent reinforcement learning (MARL), enabling centralized training with decentralized execution by factorizing the global value function into local values. To ensure individual-global-max (IGM) consistency, existing methods either enforce monotonicity constraints, which limit expressive power, or adopt softer surrogates at the cost of algorithmic complexity. In this work, we present a dynamical systems analysis of non-monotonic value decomposition, modeling learning dynamics as continuous-time gradient flow. We prove that, under approximately greedy exploration, all zero-loss equilibria violating IGM consistency are unstable saddle points, while only IGM-consistent solutions are stable attractors of the learning dynamics. Extensive experiments on both synthetic matrix games and challenging MARL benchmarks demonstrate that unconstrained, non-monotonic factorization reliably recovers IGM-optimal solutions and consistently outperforms monotonic baselines. Additionally, we investigate the influence of temporal-difference targets and exploration strategies, providing actionable insights for the design of future value-based MARL algorithms.

{\normalsize\textbf{Keywords: } }Reinforcement learning, multi-agent system.

%!TeX root=main.tex

\section{Introduction}
\label{sec:introduction}

Cooperative learning is fundamental to enabling complex collective intelligence in multi-agent systems (MAS), with broad applicability across domains such as multi-robot coordination~\cite{long2018towards, duan2025distributional}, autonomous driving~\cite{chu2019multi, li2022metadrive, zhang2024multi, chen2024end}, and smart grid control~\cite{roesch2020smart, chung2020distributed}. Multi-agent reinforcement learning (MARL)~\cite{tan1993irl,sunehag2018vdn, lowe2017maddpg, hu2023mo} offers a unified framework for learning both cooperative and competitive behaviors in complex environments. Within this framework, the paradigm of centralized training with decentralized execution (CTDE) has emerged as a standard approach for addressing cooperative tasks~\cite{lowe2017maddpg, sunehag2018vdn}. A central component of CTDE is Value Function Factorization (VFF), which approximates the global joint action-value function $Q_{\mathrm{tot}}$ by aggregating individual agent value functions $Q_i$. This decomposition not only enables effective credit assignment but also enhances scalability in multi-agent learning~\cite{rashid2018qmix}.

The effectiveness of VFF methods hinges on satisfying the Individual-Global-Max (IGM) principle, which ensures that decentralized greedy action selections by individual agents are aligned with the globally optimal joint action. One of the earliest approaches, Value-Decomposition Networks (VDN)~\cite{sunehag2018vdn}, achieves IGM by assuming a simple additive decomposition of the global value function. However, this assumption significantly limits its representational capacity. To overcome this limitation, QMIX~\cite{rashid2018qmix} introduces a more flexible monotonicity constraint, requiring that $\frac{\partial Q_{\mathrm{tot}}}{\partial Q_i} \ge 0$, which permits complex nonlinear relationships between $Q_{\mathrm{tot}}$ and the individual $Q_i$ values. This constraint is enforced via a structured mixing network with non-negative weights. The success of QMIX marked a milestone in MARL research and established a dominant design paradigm: ensuring IGM through structured architectural constraints.

While effective in ensuring IGM, the monotonicity constraint limits the model's expressive power~\cite{son2019qtran, wang2020qplex}. To overcome this bottleneck, subsequent research has explored more expressive architectures capable of representing the full class of IGM-consistent functions. For instance, QTRAN~\cite{son2019qtran} introduces an auxiliary value function to reformulate the optimization objective, while QPLEX~\cite{wang2020qplex} proposes a sophisticated duplex dueling architecture to capture richer value structures. Although these methods expand representational power in theory, they often suffer from instability or excessive architectural complexity in practice. Meanwhile, an alternative line of work attributes the failure in non-monotonic tasks to the phenomenon of relative overgeneralization, arguing that simplistic exploration strategies such as $\epsilon$-greedy are insufficient to escape suboptimal equilibria. This has motivated the development of more elaborate coordination-aware exploration mechanisms, exemplified by methods like MAVEN~\cite{mahajan2019maven} and UneVEn~\cite{gupta2021uneven}.

However, we observe that prior work typically analyzes IGM-relevant algorithmic behavior in matrix games under a uniformly random exploration policy~\cite{wang2020qplex}. Under such settings, both monotonic and non-monotonic variants of QMIX fail to learn IGM-optimal solutions. In contrast, practical Q-learning commonly employs approximately greedy exploration strategies conditioned on the evolving value function. Motivated by this, we conducted preliminary experiments using a non-monotonic variant of QMIX combined with an $\epsilon$-greedy exploration policy on a challenging matrix game. Surprisingly, the algorithm consistently converged to IGM-consistent optimal solutions. 

This empirical finding leads us to hypothesize that the underlying learning dynamics are fundamentally altered under approximately greedy policies. Specifically, we conjecture that the feedback loop between the learned value function and the policy induces an implicit self-correcting mechanism, which naturally drives the system away from suboptimal, IGM-inconsistent solutions and toward globally optimal ones—without requiring explicit structural constraints. To validate our theoretical findings, we perform a novel dynamical‐systems analysis of non‐monotonic value‐decomposition Q‐learning. Our main contributions are:
\begin{itemize}
    \item We formulate the non‐monotonic value‐decomposition Q‐learning update as a continuous‐time gradient flow and derive its learning dynamics.
    
    \item We prove that under approximately greedy exploration policies, every zero‐loss fixed point that violates the IGM principle becomes an unstable saddle point, from which the learning trajectory naturally escapes. We further show that only IGM‐consistent solutions are stable attractors of the flow.
    
    \item Extensive experiments on matrix games and challenging multi-agent benchmarks confirm our theory. Results show that unconstrained QMIX reliably recovers IGM‐consistent optimal solutions in matrix games and outperforms monotonic baselines.
    
    \item We explore the impact of SARSA-style TD targets~\cite{rummery1994line} and intrinsic‐reward–driven exploration mechanisms on the stability and performance of non‐monotonic value decomposition.
\end{itemize}

%!TeX root=main.tex

\section{Preliminaries}
\label{sec:preliminaries}

\subsection{Dec-POMDPs}

Cooperative multi-agent tasks are commonly modeled as decentralized partially observable Markov decision processes (Dec-POMDPs)~\cite{bernstein2002complexity}. A Dec-POMDP is formally defined as a tuple
\begin{equation}
G = \langle N, S, \{\mathcal{A}_i\}, P, r, \{\Omega_i\}, O, \gamma \rangle, 
\end{equation}
where:

\begin{itemize}
    \item $N$ is the number of agents.
    \item $S$ is the global state space of the environment.
    \item $\boldsymbol{A} = \mathcal{A}_1 \times \cdots \times \mathcal{A}_n$ denotes the joint action space, where $\mathcal{A}_i$ denotes the discrete action space of agent $i$.
    \item $P(s' \mid s, \boldsymbol{a}): S \times \mathbf{\mathcal{A}} \times S \to [0, 1]$ is the state transition function.
    \item $r(s, \boldsymbol{a}): S \times \mathbf{\mathcal{A}} \to \mathbb{R}$ is the shared reward function, assigning the same team reward to all agents at each time step.
    \item $\boldsymbol{O} = O_1 \times \cdots \times O_n$ is the joint observation space. $Z: S \to \boldsymbol{O}$ is the observation function, where each agent $i$ receives a local observation $o^i_t = Z(s_t)_i$.
    \item $\gamma \in [0, 1)$ is the discount factor.
\end{itemize}

All agents act according to their individual policies $\mu = (\pi_1, \dots, \pi_N)$, with the collective objective of maximizing the expected cumulative discounted team return:
\begin{equation}
J(\pi) = \mathbb{E}_{s_0, \boldsymbol{a}_t, s_{t+1}} \left[ \sum_{t=0}^\infty \gamma^t r(s_t, \boldsymbol{a}_t) \;\middle|\; \mu \right].
\end{equation}

\subsection{Value Function Factorization}

Value function factorization (VFF) aims to learn a joint action-value function $Q_{\text{tot}}(s, \boldsymbol{a})$ for cooperative multi-agent settings. This factorization is typically implemented via a mixing function that takes as input the individual agent values $Q_i$ and, optionally, the global state $s$, and outputs the global value $Q_{\text{tot}}$. For example, VDN adopts a simple additive form:
\begin{equation}
Q_{\text{tot}}(s, \boldsymbol{a}) = \sum_{i=1}^n Q_i(\eta_i, a_i),
\end{equation}
where $\eta_i$ denotes agent $i$'s action-observation history. QMIX employs a nonlinear mixing network $g_{\text{mix}}$:
\begin{equation}
Q_{\text{tot}}(s, \boldsymbol{a}) = g_{\text{mix}}\left(\{Q_i(\eta_i, a_i)\}_{i=1}^n, s\right).
\end{equation}

These methods are trained end-to-end by minimizing the joint Bellman error. The typical loss function is defined as:
\begin{equation}
L(\theta) = \mathbb{E}_{s, \boldsymbol{a}, r, s'} \left[ \left( y^{\text{tot}} - Q_{\text{tot}}(s, \boldsymbol{a}; \theta) \right)^2 \right],
\end{equation}
where the target value is given by
\[
y^{\text{tot}} = r + \gamma \max_{\boldsymbol{a}'} Q_{\text{tot}}(s', \boldsymbol{a}'; \theta^-),
\]
and $\theta$ and $\theta^-$ denote the parameters of the current and target networks, respectively.

\subsection{IGM Consistency}

A value decomposition is said to satisfy the Individual-Global-Max (IGM) principle if, for any state $s$, the optimal joint action with respect to the global value function $Q_{\text{tot}}$ can be obtained by independently maximizing each agent's local utility function. Formally, this condition is defined as:
\begin{equation}
\begin{aligned}
&\arg\max_{\boldsymbol{a} \in \mathbf{\mathcal{A}}} Q_{\text{tot}}(s, \boldsymbol{a}) \\
&\quad= \Big(
\arg\max_{a_1 \in \mathcal{A}_1} Q_1(\eta_1, a_1),
\dots,
\arg\max_{a_n \in \mathcal{A}_n} Q_n(\eta_n, a_n)
\Big).
\end{aligned}
\end{equation}

The IGM principle is crucial because it enables decentralized execution: when the condition holds, each agent can independently select its action by greedily maximizing its own local utility $Q_i$, and the resulting joint action will be globally optimal with respect to the learned $Q_{\text{tot}}$.

\section{Analysis}
\label{sec:analysis}

We begin our analysis by simplifying the original Dec-POMDP problem into a single-state matrix game. Conceptually, a single-state game can be viewed as a localized coordination subproblem embedded within any Dec-POMDP state. Instead of using a temporal-difference target, we assume a known, fixed ground-truth payoff function $y(\boldsymbol{a})$. The formal definition is as follows:

\begin{definition}[Single-State Game]
Consider a single-state matrix game involving $N$ agents. Each agent $i \in \{1,\dots,N\}$ has a discrete action set $\mathcal{A}_i$. The joint action is denoted as $\boldsymbol{a} = (a_1, \dots, a_N)$, and the corresponding ground-truth reward is $y(\boldsymbol{a}) \in \mathbb{R}$. We assume the existence of a unique globally optimal joint action:
\begin{equation}
\boldsymbol{a}^* = \operatorname*{arg\,max}_{\boldsymbol{a}} y(\boldsymbol{a}).
\end{equation}
\end{definition}

The value function is represented by a parameterized value decomposition network. For a given parameter vector $\boldsymbol{\theta}$, the network outputs local values $Q_i(a_i; \boldsymbol{\theta})$ for each agent. A mixing network $f_{\mathrm{mix}}$ then aggregates these local values into a joint value:
\begin{equation}
Q_{\mathrm{tot}}(\boldsymbol{a}; \boldsymbol{\theta}) = f_{\mathrm{mix}}\bigl(Q_1(a_1; \boldsymbol{\theta}), \dots, Q_N(a_N; \boldsymbol{\theta}); \boldsymbol{\theta}\bigr).
\end{equation}
To facilitate theoretical analysis, we define a global state vector $\mathbf{q} \in \mathbb{R}^{\sum_i |\mathcal{A}_i|}$ that concatenates all local $Q$-values:
\begin{equation}
\begin{aligned}
\mathbf{q} := \Big( 
&Q_1(a_{1,1}), \dots, Q_1(a_{1, |\mathcal{A}_1|}), \\
&\dots, \\
&Q_N(a_{N,1}), \dots, Q_N(a_{N, |\mathcal{A}_N|}) 
\Big)^\top.
\end{aligned}
\end{equation}
From this perspective, each $Q_i(a_i)$ is a component of the vector $\mathbf{q}$, and the joint value function $Q_{\mathrm{tot}}(\boldsymbol{a}; \mathbf{q})$ is treated as a function that directly takes $\mathbf{q}$ as input, with its specific form determined by the structure of $f_{\mathrm{mix}}$. We assume that $f_{\mathrm{mix}}$ satisfies a local non-degeneracy condition: it is locally Lipschitz continuous in each coordinate direction, and its Jacobian with respect to $\mathbf{q}$ has full rank on the normal subspace at the points considered in our analysis.

The learning objective is to minimize the mean squared error loss weighted by a behavior policy $\mu(\boldsymbol{a}; \mathbf{q})$:
\begin{equation}
\mathcal{L}(\mathbf{q}) = \sum_{\boldsymbol{a}} \mu(\boldsymbol{a}; \mathbf{q}) \left( y(\boldsymbol{a}) - Q_{\mathrm{tot}}(\boldsymbol{a}; \mathbf{q}) \right)^2.
\end{equation}

\subsection{Fixed Uniform Policy}

We begin by considering a fixed, uniform behavior policy $\mu_0$ that is independent of $\mathbf{q}$, i.e., $\mu_0(\boldsymbol{a}) = 1/|\mathcal{A}|^N$. Under this setting, the loss simplifies to:
\begin{equation}
\mathcal{L}_0(\mathbf{q}) = \frac{1}{|\mathcal{A}|^N} \sum_{\boldsymbol{a}} \left( y(\boldsymbol{a}) - Q_{\mathrm{tot}}(\boldsymbol{a}; \mathbf{q}) \right)^2.
\end{equation}

This learning problem reduces to a standard supervised regression task, where the goal is to fit $Q_{\mathrm{tot}}$ to a fixed target function $y(\boldsymbol{a})$. The gradient of the loss with respect to $\mathbf{q}$ is given by:
\begin{equation}
\begin{aligned}
    &\nabla_{\mathbf{q}} \mathcal{L}_0(\mathbf{q}) \\
    &\quad = -\frac{2}{|\mathcal{A}|^N} \sum_{\boldsymbol{a}} \left( y(\boldsymbol{a}) - Q_{\mathrm{tot}}(\boldsymbol{a}; \mathbf{q}) \right) \nabla_{\mathbf{q}} Q_{\mathrm{tot}}(\boldsymbol{a}; \mathbf{q}).
\end{aligned}
\end{equation}

\begin{theorem}[Zero-Loss Points Without IGM Consistency]
Under the fixed uniform policy, there exists a set of zero-loss points $\mathcal{M}_0 = \{\mathbf{q} \mid \mathcal{L}_0(\mathbf{q}) = 0\}$ containing infinitely many elements, and this set includes points that do not satisfy IGM consistency.
\end{theorem}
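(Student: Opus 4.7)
The plan is to exhibit a continuum of zero-loss configurations by exploiting the sign-unconstrained expressive freedom of the non-monotonic mixing function $f_{\mathrm{mix}}$, and then to identify inside that continuum a configuration whose local arg-maxes disagree with $\boldsymbol{a}^*$. Since $\mathcal{L}_0(\mathbf{q}) = 0$ reduces exactly to the pointwise interpolation condition $Q_{\mathrm{tot}}(\boldsymbol{a};\mathbf{q}) = y(\boldsymbol{a})$ at each of the $\prod_i|\mathcal{A}_i|$ joint actions, the argument is essentially an interpolation/dimension-count combined with a permutation construction.

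First I would establish existence of a single zero-loss $\mathbf{q}$. Choose local Q-values so that the tuples $(Q_1(a_1),\dots,Q_N(a_N))$ are pairwise distinct across joint actions; because $f_{\mathrm{mix}}$ is not constrained to be monotonic, it may be chosen from a universal class and thus set to interpolate the finite data $\{(Q_1(a_1),\dots,Q_N(a_N))\mapsto y(\boldsymbol{a})\}$, for instance by a two-layer network with sufficiently many hidden units. Infinitude of $\mathcal{M}_0$ then follows from a dimension/gauge argument: at any zero-loss $\mathbf{q}^*$, the local non-degeneracy and full-rank Jacobian hypothesis on $f_{\mathrm{mix}}$ assumed in the setup means the residual map $\mathbf{q}\mapsto(Q_{\mathrm{tot}}(\boldsymbol{a};\mathbf{q})-y(\boldsymbol{a}))_{\boldsymbol{a}}$ has non-trivial kernel along the normal subspace, and the implicit function theorem yields a continuous one-parameter branch of zero-loss configurations. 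Alternatively, the familiar factorization gauge (shifting $Q_i(\cdot)$ by a constant $c_i$ per agent and compensating in the mixing map) already generates an uncountable orbit of zero-loss $\mathbf{q}$.

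To exhibit an IGM-violating element, start from an IGM-consistent zero-loss $\mathbf{q}^*$ and swap the two local values $Q_i(a_i^*)$ and $Q_i(\tilde a_i)$ for some agent $i$ and some alternative action $\tilde a_i\ne a_i^*$. The new configuration has $\arg\max_{a_i}Q_i(a_i)=\tilde a_i\ne a_i^*$, so IGM fails; but since the set of input tuples to $f_{\mathrm{mix}}$ is only permuted, a non-monotonic $f_{\mathrm{mix}}$ can again be chosen to interpolate the same target values $y(\boldsymbol{a})$ on the permuted tuples, yielding another element of $\mathcal{M}_0$ that violates IGM. Combined with the continuous branch of the previous step applied around this point, one obtains infinitely many IGM-inconsistent zero-loss points.

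The main obstacle is the precise meaning of ``varying $\mathbf{q}$ with $f_{\mathrm{mix}}$ fixed.'' A clean proof must either (i) absorb the mixing parameters into the effective variables so that $\mathbf{q}$ carries the full expressive budget, or (ii) argue that for a generic expressive $f_{\mathrm{mix}}$ the zero-loss level set in $\mathbf{q}$-space alone is genuinely a positive-dimensional submanifold containing IGM-violating points. I would take route (i), which is consistent with how the paper treats the parameterized value-decomposition model and with its local non-degeneracy hypothesis; route (ii) would instead require a more delicate genericity argument about the fibers of $f_{\mathrm{mix}}$, which I do not expect to be needed for the statement as written.
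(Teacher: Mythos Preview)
Your proposal is correct and takes essentially the same approach as the paper: both rely on the universal approximation capability of $f_{\mathrm{mix}}$ together with the inherent credit-assignment ambiguity to conclude that many zero-loss configurations exist, including IGM-violating ones. The paper's own proof is in fact a much briefer sketch than yours---it does not spell out the interpolation, gauge-freedom, or swap constructions you provide---so your version is simply a more rigorous instantiation of the same idea, and your route~(i) for handling the mixing parameters matches how the paper implicitly treats them.
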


\begin{proof}[Proof]
Due to the universal approximation capability of $f_{\mathrm{mix}}$, there may exist many local $Q$-value configurations $\mathbf{q}$ that violate the IGM condition but can still perfectly fit the target payoff function $y(\boldsymbol{a})$ via appropriate mixing function parameters. At such points, the loss is exactly zero, and the gradient vanishes accordingly. The inherent ambiguity in credit assignment leads to a multiplicity of solutions. Consequently, the learning process may converge to any of these zero-loss solutions without any inherent bias toward IGM-consistent ones.
\end{proof}

\subsection{Approximately Greedy Policy}

When using an approximately greedy policy, the behavior policy depends on the current value estimates, resulting in a value-coupled dynamical system. Directly analyzing $\epsilon$-greedy policies is challenging due to the non-differentiability of the $\arg\max$ operator. To circumvent this issue, we introduce a smooth and differentiable surrogate based on the softmax policy with temperature parameter $\tau > 0$:
\begin{align}
\pi_i^{\tau}(a_i \mid \mathbf{q}) &:= \frac{e^{Q_i(a_i)/\tau}}{\sum_{b \in \mathcal{A}_i} e^{Q_i(b)/\tau}}, \\
\mu_{\tau}(\boldsymbol{a} \mid \mathbf{q}) &= \prod_{i=1}^N \pi_i^{\tau}(a_i \mid \mathbf{q}).
\end{align}

The corresponding smooth loss function is defined as:
\begin{equation}
\mathcal{L}_{\tau}(\mathbf{q}) = \sum_{\boldsymbol{a}} \mu_{\tau}(\boldsymbol{a} \mid \mathbf{q}) \left( y(\boldsymbol{a}) - Q_{\mathrm{tot}}(\boldsymbol{a}; \mathbf{q}) \right)^2.
\end{equation}

\begin{lemma}\label{lem:smooth_to_clarke}
Let $\{\tau_k\}_{k=1}^{\infty}$ be a sequence of temperatures such that $\tau_k \downarrow 0$. Suppose that at parameter point $\mathbf{q}$, each agent has a unique greedy action and the mixing function $f_{\mathrm{mix}}$ satisfies the local non-degeneracy condition. Then,
\begin{equation}
\lim_{k \to \infty} \nabla_{\mathbf{q}} \mathcal{L}_{\tau_k}(\mathbf{q}) \;\in\; \partial^{\mathrm{C}}_{\mathbf{q}} \mathcal{L}_0(\mathbf{q}),
\end{equation}
where $\partial^{\mathrm{C}}$ denotes the Clarke generalized gradient, and $\mathcal{L}_0$ corresponds to the limiting loss as $\tau \to 0$ in the softmax policy $\mu_\tau$.
\end{lemma}
Lemma~\ref{lem:smooth_to_clarke} provides a theoretical justification for using the smooth surrogate system to analyze the stability properties of the original non-smooth objective. The complete proof is deferred to Appendix A.

In contrast to the fixed uniform policy case, the gradient of $\mathcal{L}_\tau(\mathbf{q})$ now consists of two distinct components:
\begin{equation}
\begin{aligned}
\nabla_{\mathbf{q}} \mathcal{L}_\tau(\mathbf{q}) 
= &\underbrace{
\sum_{\boldsymbol{a}} 
\left( \nabla_{\mathbf{q}} \mu_\tau(\boldsymbol{a} \mid \mathbf{q}) \right)
\left( y(\boldsymbol{a}) - Q_{\mathrm{tot}} \right)^2
}_{\text{policy gradient term}} \\
&\quad - \underbrace{
2 \sum_{\boldsymbol{a}} 
\mu_\tau(\boldsymbol{a} \mid \mathbf{q}) 
\left( y(\boldsymbol{a}) - Q_{\mathrm{tot}} \right) 
\nabla_{\mathbf{q}} Q_{\mathrm{tot}}
}_{\text{value gradient term}}.
\end{aligned}
\end{equation}
The first term (the policy gradient term) alters the learning dynamics compared to the uniform case. We model the learning process as a continuous-time gradient flow, represented by the following ordinary differential equation (ODE):
\begin{equation}
\dot{\mathbf{q}} = -\nabla_{\mathbf{q}} \mathcal{L}_{\tau}(\mathbf{q}).
\end{equation}

The local stability of a fixed point $\mathbf{q}^*$ is determined by the spectrum of its Jacobian matrix, given by $J_{\tau}(\mathbf{q}^*) = -H_{\tau}(\mathbf{q}^*)$, where $H_{\tau} = \nabla_{\mathbf{q}}^2 \mathcal{L}_{\tau}$ is the Hessian matrix of the loss. If $H_{\tau}(\mathbf{q}^*)$ is positive definite, then $\mathbf{q}^*$ is asymptotically stable. If it has at least one negative eigenvalue, then $\mathbf{q}^*$ is unstable.

We now focus on the set of all zero-loss points and analyze their stability. These points form a \emph{zero-loss manifold} $\mathcal{M}$, defined as the set of parameter configurations that perfectly fit the ground-truth payoff:
\begin{equation}
\mathcal{M} := \left\{ \mathbf{q} \mid Q_{\mathrm{tot}}(\boldsymbol{a}; \mathbf{q}) = y(\boldsymbol{a}), \;\forall\, \boldsymbol{a} \right\}.
\end{equation}
Every point within $\mathcal{M}$ is a fixed point of the gradient flow. The local stability of such points is governed by the curvature of the loss function, as characterized by the Hessian matrix. Theorem~\ref{thm:igm_stable} establishes that for any perturbation direction that alters a local greedy action to a suboptimal one, the corresponding quadratic form is strictly positive. As a result, the fixed point $\mathbf{q}^*$ is locally asymptotically stable along the center-stable manifold.

\begin{theorem}[Stability of IGM-Consistent Fixed Points]\label{thm:igm_stable}
Suppose the following conditions hold:

(H1) For every $\mathbf{q} \in \mathcal{M}$, each agent's greedy action $\arg\max_{a_i} Q_i(a_i)$ is unique.

(H2) The Jacobian of the mixing function $f_{\mathrm{mix}}$ has full rank on the normal subspace of $\mathcal{M}$ (local non-degeneracy).

(H3) The softmax temperature $\tau > 0$ is sufficiently small such that the induced policy is dominated by greedy actions.

Then for any zero-loss fixed point $\mathbf{q}^* \in \mathcal{M}$ that satisfies IGM consistency (i.e., $\mathbf{g}(\mathbf{q}^*) = \mathbf{u}^*$), the Hessian $H_\tau(\mathbf{q}^*)$ is positive definite on the normal subspace $(T_{\mathbf{q}^*} \mathcal{M})^\perp$.
\end{theorem}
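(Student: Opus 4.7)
The plan is to exploit the fact that at a zero-loss point, most contributions to the Hessian vanish, reducing $H_\tau(\mathbf{q}^*)$ to a Gauss--Newton form whose null space coincides precisely with the tangent space $T_{\mathbf{q}^*}\mathcal{M}$. First I would differentiate $\mathcal{L}_\tau(\mathbf{q}) = \sum_{\boldsymbol{a}} \mu_\tau(\boldsymbol{a}\mid\mathbf{q}) \, e(\boldsymbol{a};\mathbf{q})^2$ twice, where $e := y - Q_{\mathrm{tot}}$. In the expansion of $\partial_i\partial_j[\mu_\tau e^2]$, every resulting term carries a factor of $e$ or $e^2$ except for $2\mu_\tau (\partial_i e)(\partial_j e)$. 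Since $e(\boldsymbol{a};\mathbf{q}^*) = 0$ at every $\mathbf{q}^*\in\mathcal{M}$, the Hessian collapses to
\[
H_\tau(\mathbf{q}^*) = 2 \sum_{\boldsymbol{a}} \mu_\tau(\boldsymbol{a}\mid\mathbf{q}^*) \, \nabla_{\mathbf{q}} Q_{\mathrm{tot}}(\boldsymbol{a};\mathbf{q}^*) \, \nabla_{\mathbf{q}} Q_{\mathrm{tot}}(\boldsymbol{a};\mathbf{q}^*)^\top,
\]
which is manifestly positive semidefinite --- neither the policy-gradient terms nor the $\nabla^2 Q_{\mathrm{tot}}$ contributions survive at the manifold.

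Next, I would characterize the normal subspace intrinsically. Locally, $\mathcal{M}$ is the level set of $F(\mathbf{q}) := (Q_{\mathrm{tot}}(\boldsymbol{a};\mathbf{q}))_{\boldsymbol{a}}$, and (H2) guarantees this level set is a smooth embedded submanifold with $T_{\mathbf{q}^*}\mathcal{M} = \ker DF(\mathbf{q}^*)$; hence $(T_{\mathbf{q}^*}\mathcal{M})^\perp = \mathrm{span}\{\nabla_{\mathbf{q}} Q_{\mathrm{tot}}(\boldsymbol{a};\mathbf{q}^*):\boldsymbol{a}\}$. For any nonzero $\mathbf{v}$ in this normal subspace, $\mathbf{v} \notin \ker DF(\mathbf{q}^*)$, so at least one joint action $\tilde{\boldsymbol{a}}$ must satisfy $\mathbf{v}^\top \nabla_{\mathbf{q}} Q_{\mathrm{tot}}(\tilde{\boldsymbol{a}};\mathbf{q}^*) \neq 0$. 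Since $\mu_\tau(\tilde{\boldsymbol{a}}\mid\mathbf{q}^*) > 0$ for every $\tau > 0$, the corresponding summand in the quadratic form $\mathbf{v}^\top H_\tau(\mathbf{q}^*) \mathbf{v}$ is strictly positive, yielding positive definiteness on $(T_{\mathbf{q}^*}\mathcal{M})^\perp$.

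The remaining step is to connect this spectral fact to the geometric picture sketched just before the theorem: the IGM hypothesis $\mathbf{g}(\mathbf{q}^*) = \mathbf{u}^*$, combined with uniqueness (H1) and softmax concentration (H3), ensures that the dominant greedy action is $\mathbf{u}^*$ itself, and that any normal-direction perturbation either redistributes residual error across currently non-greedy actions or flips some agent's greedy preference to a strictly suboptimal joint action; either way, the Gauss--Newton lower bound above reflects genuine second-order curvature. Because the Jacobian $-H_\tau(\mathbf{q}^*)$ then has strictly negative spectrum transverse to $\mathcal{M}$ and zero along it, the Center--Stable Manifold Theorem places $\mathbf{q}^*$ on the stable part of the center-stable manifold, completing the local asymptotic stability claim.

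The main obstacle I anticipate is verifying that (H2) really delivers full normal-rank of the outer-product sum at \emph{every} point of $\mathcal{M}$: mixing networks with shared or tied parameters can induce unexpected algebraic relations among the gradients $\{\nabla_{\mathbf{q}} Q_{\mathrm{tot}}(\boldsymbol{a};\mathbf{q}^*)\}_{\boldsymbol{a}}$, and one must rule out rank collapse in the row span of $DF(\mathbf{q}^*)$. A secondary subtlety is keeping the strict inequality $\tau$-uniform so the conclusion transfers to the $\tau\downarrow 0$ Clarke-gradient regime via Lemma~\ref{lem:smooth_to_clarke}; this is where the IGM assumption becomes structurally important, since it guarantees that the dominant softmax weight $\mu_\tau(\mathbf{u}^*)$ rests on an action whose gradient generically aligns with the normal directions, so that the positive curvature does not degenerate as the policy sharpens.
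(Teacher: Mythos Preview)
Your Gauss--Newton reduction is correct and reaches the same endpoint as the paper's appendix proof, but by a cleaner and more complete route. The paper restricts to specific action-switching directions $\mathbf{v} = \mathbf{e}_{i,u_i'} - \mathbf{e}_{i,u_i^*}$, uses (H3) to argue that the perturbed loss concentrates on the new greedy joint action $\mathbf{g}'$, and arrives at $\mathbf{v}^\top H_\tau(\mathbf{q}^*)\mathbf{v} \approx 2\bigl(\nabla_{\mathbf{q}} Q_{\mathrm{tot}}(\mathbf{g}';\mathbf{q}^*)^\top \mathbf{v}\bigr)^2$, after which (H2) gives strict positivity. You instead differentiate the full loss exactly, obtain the sum of rank-one positive-semidefinite terms over \emph{all} joint actions, and identify the kernel with $T_{\mathbf{q}^*}\mathcal{M}$ directly from the level-set description. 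Your argument covers the entire normal subspace at once and needs no low-temperature approximation; the paper's argument treats only the particular perturbations that feed into the IGM storyline and Theorem~\ref{thm:igm_saddle}.

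One thing you should state more plainly than you do in your closing paragraphs: your argument never actually uses the IGM hypothesis $\mathbf{g}(\mathbf{q}^*) = \mathbf{u}^*$. The Gauss--Newton Hessian you derive is positive semidefinite, with kernel exactly $T_{\mathbf{q}^*}\mathcal{M}$, at \emph{every} point of $\mathcal{M}$, IGM-consistent or not. So the conclusion of Theorem~\ref{thm:igm_stable} holds across all of $\mathcal{M}$, not only on its IGM-consistent part. This is not a defect in your proof---the paper's appendix derivation shares the same feature---but it sits in direct tension with Theorem~\ref{thm:igm_saddle}, which asserts a direction of negative curvature in $H_\tau$ at IGM-inconsistent zero-loss points. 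Your instinct to locate the role of IGM in $\tau$-uniformity and in the alignment of $\mu_\tau(\mathbf{u}^*)$ with normal directions is reasonable, but the exact Hessian of $\mathcal{L}_\tau$ on $\mathcal{M}$ simply cannot separate the two cases; any genuine distinction would have to come from dynamics off the manifold (nonzero residuals) or from a semi-gradient flow that does not differentiate through the policy.
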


\begin{proof}[Proof]
We determine the spectral properties of the Hessian $H_\tau(\mathbf{q}^*)$ by analyzing the sign of its quadratic form $\mathbf{v}^\top H_\tau(\mathbf{q}^*) \mathbf{v}$ along critical directions. The full proof is provided in Appendix A.

We aim to show that $H_\tau(\mathbf{q}^*)$ is positive definite on the normal subspace $(T_{\mathbf{q}^*} \mathcal{M})^\perp$. Consider an arbitrary perturbation direction $\mathbf{v}$ in this subspace, such as $\mathbf{v} = \mathbf{e}_{i u_i'} - \mathbf{e}_{i u_i^*}$, where $u_i' \neq u_i^*$. This perturbation attempts to flip agent $i$’s greedy action from the optimal $u_i^*$ to a suboptimal alternative $u_i'$.

On the zero-loss manifold $\mathcal{M}$, the quadratic form of the Hessian in this direction can be computed explicitly as:
\begin{equation}\label{eq:hessian_form_stable}
\begin{aligned}
&\mathbf{v}^\top H_\tau(\mathbf{q}^*) \mathbf{v} \\
&= \frac{1}{\tau} \sum_{\boldsymbol{a}} \mu_\tau(\boldsymbol{a} \mid \mathbf{q}^*) 
\Big[\, \big( \mathbf{1}[a_i = u_i'] - \pi_i^\tau(u_i') \big) \\
&\quad - \big( \mathbf{1}[a_i = u_i^*] - \pi_i^\tau(u_i^*) \big) \,\Big]^2.
\end{aligned}
\end{equation}
In the low-temperature limit $\tau \to 0$, the policy $\mu_\tau(\boldsymbol{a} \mid \mathbf{q}^*)$ concentrates its mass on the greedy joint action $\mathbf{u}^*$. For this dominant term $\boldsymbol{a} = \mathbf{u}^*$, we have $\pi_i^\tau(u_i') \to 0$, $\pi_i^\tau(u_i^*) \to 1$, and the indicator functions yield $\mathbf{1}[a_i = u_i'] = 0$, $\mathbf{1}[a_i = u_i^*] = 1$. The expression inside the brackets thus converges to a nonzero constant.
Since the quadratic form is a weighted sum of squared terms with the dominant contribution strictly positive, we conclude that $\mathbf{v}^\top H_\tau(\mathbf{q}^*) \mathbf{v} > 0$. Therefore, $H_\tau(\mathbf{q}^*)$ is positive definite in all directions that perturb the optimal greedy action, ensuring that $\mathbf{q}^*$ is asymptotically stable.
\end{proof}

\begin{theorem}[Instability of IGM-Inconsistent Fixed Points]\label{thm:igm_saddle}
Under the same assumptions (H1) -- (H3) as in Theorem~\ref{thm:igm_stable}, consider a zero-loss fixed point $\mathbf{q}^* \in \mathcal{M}$ that violates IGM consistency (i.e., $\mathbf{g}(\mathbf{q}^*) \neq \mathbf{u}^*$).
Then there exists an agent $i$ and a perturbation direction
\begin{equation}
\mathbf{v} = \mathbf{e}_{i, u_i^*} - \mathbf{e}_{i, g_i(\mathbf{q}^*)}
\end{equation}
such that
\begin{equation}
\mathbf{v}^\top H_\tau(\mathbf{q}^*) \mathbf{v}
\approx -\frac{2}{\tau} \left[ y(\mathbf{u}^*) - y(\mathbf{g}(\mathbf{q}^*)) \right] < 0,
\end{equation}
indicating that $H_\tau(\mathbf{q}^*)$ has a negative eigenvalue, and $\mathbf{q}^*$ is a structurally unstable saddle point.
\end{theorem}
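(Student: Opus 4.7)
The plan is to mirror the quadratic-form analysis of Theorem~\ref{thm:igm_stable}, but now exploit the fact that at an IGM-inconsistent $\mathbf{q}^*$ the approximately greedy policy concentrates on a \emph{suboptimal} joint action. First I would pick the critical perturbation direction. Since $\mathbf{g}(\mathbf{q}^*) \neq \mathbf{u}^*$, there is at least one agent $i$ with $g_i(\mathbf{q}^*) \neq u_i^*$; choose $\mathbf{v} = \mathbf{e}_{i, u_i^*} - \mathbf{e}_{i, g_i(\mathbf{q}^*)}$. This is precisely the direction that, if it grew, would flip agent $i$'s local argmax toward the true optimum. By (H2), $\mathbf{v}$ has a non-trivial component in the normal subspace of $\mathcal{M}$ and therefore probes a genuine curvature direction rather than sliding along the degenerate tangent space of the zero-loss manifold.

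Next I would compute $\mathbf{v}^\top H_\tau(\mathbf{q}^*) \mathbf{v}$ by reusing the second-order expansion that produced equation~(\ref{eq:hessian_form_stable}) in the proof of Theorem~\ref{thm:igm_stable}, applied now at the new base point $\mathbf{q}^*$. The same bookkeeping splits the quadratic form into a bounded value-curvature piece and a policy-sensitivity piece that scales as $1/\tau$ because it arises from the softmax score $\nabla_{\mathbf{q}} \log \mu_\tau$. The qualitative novelty compared with Theorem~\ref{thm:igm_stable} is that, as $\tau \to 0$, the policy $\mu_\tau$ concentrates on $\mathbf{g}(\mathbf{q}^*)$ rather than on $\mathbf{u}^*$; for this dominant joint action the two indicator-minus-probability factors associated with $\mathbf{v}$ no longer agree with the greedy convention ($\mathbf{1}[a_i=u_i^*]\to 0$ while $\mathbf{1}[a_i=g_i(\mathbf{q}^*)]\to 1$), so the surviving cross term that multiplies the policy score against the residuals $y(\mathbf{a}) - Q_{\mathrm{tot}}(\mathbf{a}; \mathbf{q}^*)$ induced off the manifold picks up the opposite sign from the stable case. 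Because on $\mathcal{M}$ we have $Q_{\mathrm{tot}} = y$ exactly, the only contribution that survives at leading order in $\tau$ is proportional to the payoff gap $y(\mathbf{u}^*) - y(\mathbf{g}(\mathbf{q}^*))$.

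The crux of the proof is pinning down the sign and magnitude of this leading term. Because $\mathbf{u}^*$ is the unique global maximizer, $y(\mathbf{u}^*) - y(\mathbf{g}(\mathbf{q}^*)) > 0$, and the algebraic sign emerging from the expansion is negative, yielding the announced estimate $\mathbf{v}^\top H_\tau(\mathbf{q}^*) \mathbf{v} \approx -\frac{2}{\tau}\bigl[y(\mathbf{u}^*) - y(\mathbf{g}(\mathbf{q}^*))\bigr]$. Hypothesis (H3) is then invoked to take $\tau$ small enough that this $O(1/\tau)$ contribution dominates the bounded value-curvature $O(1)$ piece, so the whole quadratic form is strictly negative. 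The Jacobian $J_\tau = -H_\tau$ of the gradient flow therefore has a strictly positive eigenvalue at $\mathbf{q}^*$; by the stable/unstable manifold theorem, $\mathbf{q}^*$ is a hyperbolic saddle, and any trajectory not confined to its stable manifold escapes along $\mathbf{v}$ toward the IGM-consistent attractor guaranteed by Theorem~\ref{thm:igm_stable}. The main obstacle will be cleanly tracking which cross terms survive as $\tau \to 0$ and fixing their overall sign without arithmetic error, since the leading piece arises from a delicate interplay between the policy score (which blows up like $1/\tau$) and the residuals produced by the perturbation (which vanish on $\mathcal{M}$ and grow only linearly in the perturbation size), so both must be expanded to a consistent order before one can legitimately identify the unique surviving polynomial-in-$\tau$ contribution.
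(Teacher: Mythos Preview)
Your overall strategy matches the paper's: pick the same correcting perturbation direction, expand the quadratic form of the Hessian in the low-temperature regime, isolate an $O(1/\tau)$ payoff-gap term, and conclude that it dominates the bounded value-curvature piece. The discrepancy is in \emph{which} payoff gap actually appears, and this is where your proposal has a gap.

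You assert that the surviving leading term is proportional to $y(\mathbf{u}^*) - y(\mathbf{g}(\mathbf{q}^*))$, matching the formula displayed in the theorem statement, and then justify negativity by the uniqueness of the global maximizer. But the perturbation $\mathbf{v} = \mathbf{e}_{i,u_i^*} - \mathbf{e}_{i,g_i(\mathbf{q}^*)}$ touches only agent $i$'s local $Q$-values, so after the flip the softmax policy concentrates not on $\mathbf{u}^*$ but on the \emph{single-flip} joint action $\mathbf{g}' = (u_i^*,\, g_{-i}(\mathbf{q}^*))$. The paper's proof works with this $\mathbf{g}'$ explicitly, obtaining $\mathbf{v}^\top H_\tau(\mathbf{q}^*)\mathbf{v} \approx -\tfrac{C}{\tau}\bigl[y(\mathbf{g}') - y(\mathbf{g}(\mathbf{q}^*))\bigr]$, and then argues that because $\mathbf{g}(\mathbf{q}^*)$ is not globally optimal there \emph{exists} an agent $i$ for which this one-coordinate flip is locally improving, i.e.\ $y(\mathbf{g}') > y(\mathbf{g}(\mathbf{q}^*))$. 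Your mechanism for how the full optimum $y(\mathbf{u}^*)$ could emerge from a single-agent perturbation is asserted, not derived, and your sign argument---true for $\mathbf{u}^*$---does not automatically transfer to $\mathbf{g}'$. Relatedly, your plan to ``reuse the expansion that produced equation~(\ref{eq:hessian_form_stable})'' needs care: that display is a weighted sum of squares and is non-negative regardless of which actions label the indicators, so it cannot by itself yield a negative value; the instability has to come from a different cross-contribution tied to the payoff difference induced by the action switch, which you allude to (``the surviving cross term \dots'') but do not isolate.
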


\begin{proof}[Proof]
If $\mathbf{g}(\mathbf{q}^*) \neq \mathbf{u}^*$, we aim to show that $H_\tau(\mathbf{q}^*)$ admits at least one negative eigenvalue. Let $g_i = g_i(\mathbf{q}^*)$ denote the suboptimal greedy action selected by agent $i$. We construct a perturbation direction intended to "correct" this suboptimal choice:
\begin{equation}
\mathbf{v} = \mathbf{e}_{i u_i^*} - \mathbf{e}_{i g_i}.
\end{equation}
This perturbation attempts to redirect the greedy policy toward an action with potentially higher reward.

In the low-temperature regime, the softmax policy induces a gradient amplification effect that makes the Hessian’s curvature primarily governed by reward differences. It can be shown that the quadratic form in this direction approximately satisfies:
\begin{equation}\label{eq:hessian_form_unstable}
\mathbf{v}^\top H_\tau(\mathbf{q}^*) \mathbf{v} \approx -\frac{C}{\tau} \left[ y(\mathbf{g}') - y(\mathbf{g}(\mathbf{q}^*)) \right],
\end{equation}
where $C > 0$ is a constant, and $\mathbf{g}'$ is the new greedy joint action induced by the perturbation.
Since $\mathbf{g}(\mathbf{q}^*)$ is not globally optimal, there always exists a local improving direction such that $y(\mathbf{g}') > y(\mathbf{g}(\mathbf{q}^*))$. This implies that the quadratic form is strictly negative in the direction $\mathbf{v}$, i.e., $\mathbf{v}^\top H_\tau(\mathbf{q}^*) \mathbf{v} < 0$, thereby confirming the existence of a negative eigenvalue in the Hessian. Consequently, the fixed point $\mathbf{q}^*$ is a saddle point.
\end{proof}

Combining Theorems~\ref{thm:igm_stable} and~\ref{thm:igm_saddle}, we conclude that the set of IGM-consistent zero-loss solutions forms the unique center-stable manifold of the learning dynamics, while any IGM-inconsistent zero-loss point is a saddle point and hence unstable. Under continued exploration, the system is guaranteed to escape from these unstable saddle points and ultimately converge to the stable submanifold: 
\begin{equation}
\mathcal{M}_{\mathrm{IGM}} = \left\{ \mathbf{q} \in \mathcal{M} \mid \mathbf{g}(\mathbf{q}) = \mathbf{u}^* \right\}.
\end{equation}

\section{Methodology}

In this section, we investigate a non-monotonic variant of the QMIX algorithm and its two extensions designed to address the challenges introduced by removing the monotonicity constraint. Additional implementation details, including hyperparameters and pseudocode, are provided in Appendix~B.

\subsection{Non-Monotonic Mixing Function}
We investigate a non-monotonic variant of the QMIX algorithm. Standard QMIX imposes a monotonicity constraint on the mixing network, requiring that the joint action-value function $Q_{\mathrm{tot}}$ be non-decreasing with respect to each individual agent's utility $Q_i$. Formally, $\frac{\partial Q_{\mathrm{tot}}}{\partial Q_i} \geq 0$, which is enforced by constraining the mixing network weights to be non-negative. While this constraint guarantees IGM consistency by design, it restricts the expressive capacity of the model.

In contrast, our approach removes this structural constraint, allowing the mixing network to learn an arbitrary aggregation function of the individual values:
\begin{equation}
    Q_{\mathrm{tot}}(s, \mathbf{a}) = g_{\mathrm{mix}}\big(\{Q_i(\eta_i, a_i)\}_{i=1}^n, s\big),
\end{equation}
where $s$ denotes the global state, $\eta_i$ represents the local observation history of agent $i$, and $g_{\mathrm{mix}}$ is implemented by a feed-forward neural network. Apart from the removal of the monotonicity constraint, the architecture remains identical to QMIX.

Our central hypothesis is that under approximately greedy exploration policies, the learning dynamics are sufficient to drive the system towards IGM-consistent solutions, even in the absence of explicit architectural constraints. The network is trained end-to-end by minimizing the joint Bellman error:
\begin{equation}
    L(\theta) = \mathbb{E}_{s, \mathbf{a}, r, s'} \big[ \big( y^{\mathrm{tot}} - Q_{\mathrm{tot}}(s, \mathbf{a}; \theta) \big)^2 \big],
\end{equation}
where the target is computed using a target network,
\begin{equation}
    y^{\mathrm{tot}} = r + \gamma \max_{\mathbf{a}'} Q_{\mathrm{tot}}(s', \mathbf{a}'; \theta^-).
\end{equation}

\subsection{SARSA-Style Updating with TD($\lambda$)}
Removing the monotonicity constraint introduces a potential issue: during the learning process, IGM consistency can no longer be guaranteed prior to convergence. In standard Q-learning, this may lead to undesirable effects because the target
\begin{equation}
    y^{\mathrm{tot}}_{\mathrm{Q}} 
    = r + \gamma \max_{\mathbf{a}'} Q_{\mathrm{tot}}(s', \mathbf{a}'; \theta^-),
\end{equation}
which relies on the $\max_{\mathbf{a}'}$ operator, is no longer a reliable training signal in the absence of IGM.

To address this issue, we adopt a SARSA-style update rule that removes the problematic $\max$ operator. Instead of computing the target based on the maximum over all possible next actions, we use the joint action $\mathbf{a}'$ that was actually sampled from the replay buffer:
\begin{equation}
    y^{\mathrm{tot}}_{\mathrm{SARSA}}
    = r + \gamma Q_{\mathrm{tot}}(s', \mathbf{a}'; \theta^-),
\end{equation}
where $\mathbf{a}'$ is the next joint action executed by the behavior policy. This formulation aligns the training signal with the policy's actual behavior and mitigates the bias caused by the $\max$ operator. We then incorporate the TD($\lambda$) algorithm:
\begin{equation}
    y^{\mathrm{tot}}_{\lambda}
    = (1-\lambda) \sum_{n=1}^\infty \lambda^{n-1} y^{\mathrm{tot}}_{(n)},
\end{equation}
where $y^{\mathrm{tot}}_{(n)}$ denotes the $n$-step return. By averaging over multiple step returns, TD($\lambda$) smooths the learning signal, mitigates variance, and improves credit assignment over longer horizons, which is particularly beneficial for non-monotonic value decomposition.

It is worth noting that SARSA-type updates are theoretically on-policy and would typically require off-policy corrections, such as importance sampling. However, prior work by \citet{hernandez2019understanding} has shown that, in the context of deep RL, applying such corrections to SARSA often degrades performance. Therefore, we omit off-policy corrections in our implementation.

\subsection{Intrinsic Reward Driven Exploration-Exploitation}
In our preliminary experiments, we observed that simply increasing the exploration rate in the original QMIX algorithm provided no benefit. In contrast, for our non-monotonic variant of QMIX, a higher degree of exploration consistently improved performance. Motivated by this observation, we integrate a curiosity-driven exploration mechanism based on Random Network Distillation (RND)\cite{rnd}. RND encourages exploration by assigning intrinsic rewards for visiting novel states. It employs two neural networks: (i) a fixed, randomly initialized target network that maps states to feature embeddings, and (ii) a predictor network trained to approximate the target network’s output for visited states. The prediction error of the predictor network then serves as the intrinsic reward signal.  The total reward used to train the value function is given by:
\begin{equation}
    r = r_{\mathrm{ext}} + \beta \cdot r_{\mathrm{int}},
\end{equation}
where $r_{\mathrm{ext}}$ is the extrinsic team reward from the environment, $r_{\mathrm{int}}$ is the intrinsic curiosity reward generated by RND. This curiosity-driven mechanism serves as a more sophisticated alternative to $\epsilon$–greedy exploration and allows us to investigate the role of exploration in non-monotonic value factorization.

\section{Experiments}

We evaluate our approach on three representative benchmarks: one-step matrix games, the StarCraft Multi-Agent Challenge (SMAC) \cite{samvelyan2019smacv1}, and Google Research Football (GRF) \cite{grf}. 
We compare our method against three strong baselines: QMIX, QPLEX, and QTRAN. 
These baselines were selected to cover the major paradigms in value factorization: a strictly monotonic approach (QMIX), a refined architecture that preserves a relaxed form of monotonicity (QPLEX), and a non-monotonic method based on an alternative factorization principle (QTRAN). 
Complete experimental results, along with additional ablation studies, are provided in Appendix~C.

\begin{table*}[htbp]
\centering
\begin{tabular}{@{}c@{\hspace{3em}}c@{\hspace{3em}}c@{}}
  \begin{subtable}[t]{0.19\textwidth}
    \centering
    \resizebox{\linewidth}{!}{
      \begin{tabular}{|c|c|c|c|}
        \hline
        \diagbox{$u_1$}{$u_2$} & A & B & C \\ \hline
        A & 12 & -12 & -12 \\ \hline
        B & -12 & 0 & 0 \\ \hline
        C & -12 & 0 & 0 \\ \hline
      \end{tabular}
    }
    \caption{Payoff of Game A}
  \end{subtable}
  &
  \begin{subtable}[t]{0.26\textwidth}
    \centering
    \resizebox{\linewidth}{!}{
      \begin{tabular}{|c|c|c|c|}
        \hline
        \diagbox{$Q_1$}{$Q_2$} & A(7.4) & B(-4.0) & C(-4.1) \\ \hline
        A(8.1) & 12.0 & -12.0 & -12.0 \\ \hline
        B(-4.3) & -12.0 & 0.0 & 0.0 \\ \hline
        C(-4.3) & -12.0 & 0.0 & 0.0 \\ \hline
      \end{tabular}
    }
    \caption{Our method on Game A}
  \end{subtable}
  &
  \begin{subtable}[t]{0.26\textwidth}
    \centering
    \resizebox{\linewidth}{!}{
      \begin{tabular}{|c|c|c|c|}
        \hline
        \diagbox{$Q_1$}{$Q_2$} & A(-5.2) & B(0.1) & C(0.1) \\ \hline
        A(-5.3) & -12.0 & -12.0 & -12.0 \\ \hline
        B(0.1) & -12.0 & 0.0 & -0.1 \\ \hline
        C(0.1) & -12.0 & -0.0 & -0.0 \\ \hline
      \end{tabular}
    }
    \caption{QMIX on Game A}
  \end{subtable}
  \\[2em]
  \begin{subtable}[t]{0.19\textwidth}
    \centering
    \resizebox{\linewidth}{!}{
      \begin{tabular}{|c|c|c|c|}
        \hline
        \diagbox{$u_1$}{$u_2$} & A & B & C \\ \hline
        A & 12 & 0 & 10 \\ \hline
        B & 0 & 0 & 10 \\ \hline
        C & 10 & 10 & 10 \\ \hline
      \end{tabular}
    }
    \caption{Payoff of Game B}
  \end{subtable}
  &
  \begin{subtable}[t]{0.26\textwidth}
    \centering
    \resizebox{\linewidth}{!}{
      \begin{tabular}{|c|c|c|c|}
        \hline
        \diagbox{$Q_1$}{$Q_2$} & A(6.0) & B(-3.7) & C(2.5) \\ \hline
        A(6.2) & 12.0 & 0.0 & 10.0 \\ \hline
        B(-4.4) & 0.0 & 0.0 & 10.0 \\ \hline
        C(1.7) & 10.0 & 10.0 & 10.0 \\ \hline
      \end{tabular}
    }
    \caption{Our method on Game B}
  \end{subtable}
  &
  \begin{subtable}[t]{0.26\textwidth}
    \centering
    \resizebox{\linewidth}{!}{
      \begin{tabular}{|c|c|c|c|}
        \hline
        \diagbox{$Q_1$}{$Q_2$} & A(-0.6) & B(-0.6) & C(0.5) \\ \hline
        A(-0.6) & 0.1 & 0.1 & 4.1 \\ \hline
        B(-0.6) & 0.1 & 0.1 & 4.1 \\ \hline
        C(0.5) & 5.6 & 5.6 & 10.0 \\ \hline
      \end{tabular}
    }
    \caption{QMIX on Game B}
  \end{subtable}
\end{tabular}

\caption{True payoff matrices and estimated value functions for two matrix games. Each row corresponds to one game: the left column shows the ground-truth payoff matrix, the middle column shows value estimates from our non-monotonic QMIX variant, and the right column shows estimates from standard QMIX. Top row: Game A. Bottom row: Game B.}
\label{tab:matrix_game}
\end{table*}

\subsection{Matrix Game}
We begin by evaluating non-monotonic QMIX with SARSA-style updates on two one-step matrix games, which are adapted from \cite{son2019qtran}. Among them, Game~B is more challenging than Game~A, as it contains stronger local optima that are more difficult to escape. In addition, unlike previous studies on matrix games that employ uniformly random policies, we adopt an $\epsilon$-greedy strategy, which is consistent with our theoretical analysis. 

The results, summarized in Table~\ref{tab:matrix_game}, show that our method successfully learns the correct global payoff matrix. In contrast, QMIX fails to recover the true payoffs. These findings provide initial empirical evidence that, under appropriate exploration–exploitation mechanisms, removing the monotonicity constraint enables the algorithm to accurately recover IGM-optimal solutions in non-monotonic settings.

\subsection{SMAC and GRF}
In order to validate our propositions, we implemented RND to generate intrinsic exploration rewards for non-monotonic QMIX with SARSA-style updates, and conducted experiments in complex, high-dimensional SMAC and GRF environments. As shown in Figures \ref{fig:smaccompare} and \ref{fig:grfcompare}, our method demonstrates clear advantages over the baselines across a variety of maps. QTRAN, despite its success in the matrix game, performs poorly in these complex environments. Similarly, QMIX and QPLEX struggle to find optimal solutions in these non-monotonic sequential decision-making problems due to their inherent structural constraints. Further ablation studies on additional variants of our method are provided in Appendix~C.

\begin{figure*}[tbp]
\centering
\includegraphics[width=1.0\textwidth]{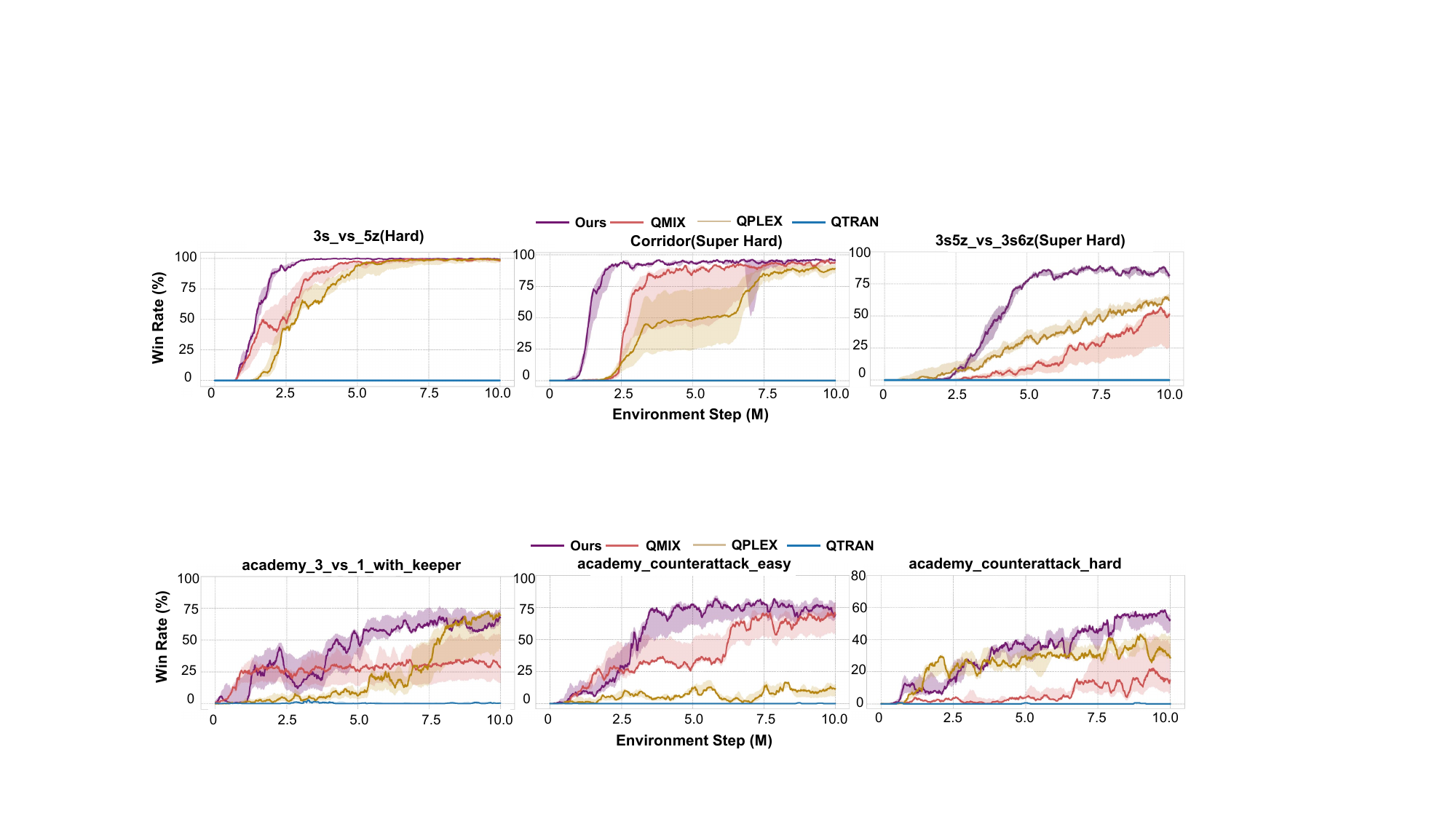}
\caption{Comparisons of test win rate on SMAC maps: \texttt{3s\_vs\_5z}, \texttt{corridor}, and \texttt{3s5z\_vs\_3s6z}. The results are averaged over five independent runs, with the 25\%–75\% interquartile range shown as a shaded region.}
\label{fig:smaccompare}
\end{figure*}

\begin{figure*}[tbp]
\centering
\includegraphics[width=1.0\textwidth]{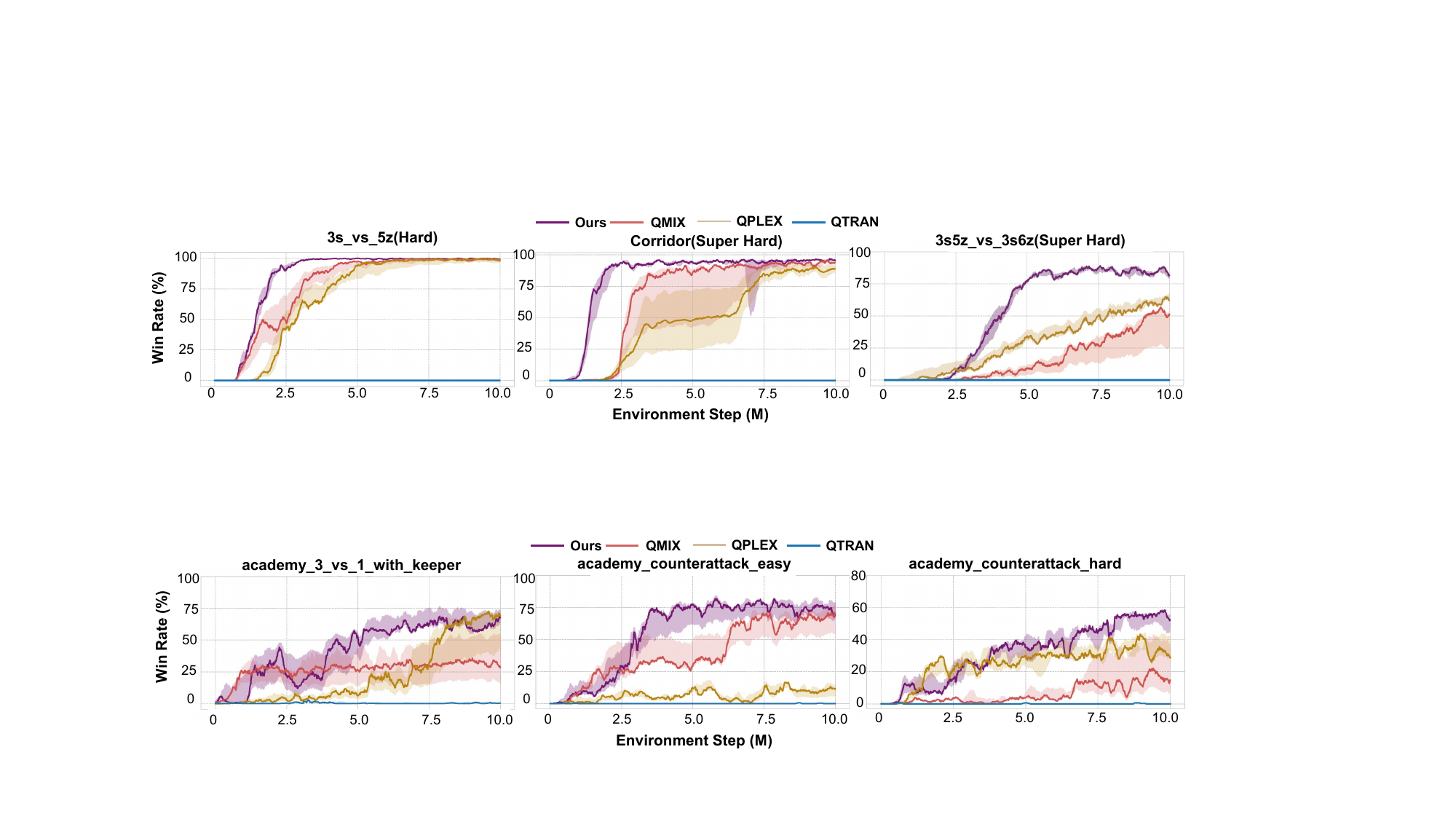}
\caption{Comparisons of test win rate on GRF tasks: \texttt{academy\_3\_vs\_1\_keeper}, \texttt{academy\_counterattack\_easy}, and \texttt{academy\_counterattack\_hard}. The results are averaged over five independent runs.}
\label{fig:grfcompare}
\end{figure*}

\subsection{Discussion}
The empirical results provide strong evidence supporting our theoretical analysis. The observed learning dynamics closely align with our model of stable and unstable equilibria. Unlike QTRAN, whose projection mechanism can restrict the use of non-optimal global actions and thus impede learning, our unconstrained approach leverages broad exploration. This enables the value function to be estimated across a wider range of joint actions, mitigating the risk of premature convergence to suboptimal solutions caused by relative overgeneralization.

We initially expected non-monotonic QMIX to perform comparably to the original QMIX. Surprisingly, however, on several challenging SMAC tasks, our method not only consistently outperforms the original QMIX but also converges significantly faster. We hypothesize that this advantage arises from the removal of the monotonicity constraint, which substantially enhances the expressiveness of the mixing network. Under an appropriate exploration mechanism, this increased flexibility allows non-monotonic QMIX to more effectively discover optimal policies. In addition, both QPLEX and QTRAN enhance the representational capacity of the value function through specialized network architectures and additional soft constraints. However, on the challenging tasks we evaluated, neither method demonstrated superior performance. We hypothesize that the increased algorithmic complexity of these approaches may reduce their robustness and heighten sensitivity to hyperparameter choices, thereby limiting their practical effectiveness compared to our simpler unconstrained formulation.

It is worth noting that in the GRF tasks, our method exhibits a slower reward increase during the early stage of training, followed by a sharp improvement in the later stage. We attribute this behavior to the existence of multiple zero-loss solutions in the initial learning dynamics, including suboptimal ones. During this exploratory phase, the algorithm navigates the policy space and transiently approaches unstable saddle points. The system eventually escapes these unstable regions and converges to the stable, IGM-consistent submanifold, resulting in a rapid performance surge and superior final results.

\section{Related Work}

Value Function Factorization (VFF) is a central branch of the CTDE paradigm in multi-agent reinforcement learning. VFF methods seek to learn a global joint action-value function $Q_{\mathrm{tot}}$ and decompose it into per-agent value functions $Q_i$. This factorized structure is designed to ensure that optimizing each local utility $Q_i$ leads to the optimization of the global objective $Q_{\mathrm{tot}}$, while enabling decentralized execution based solely on local observations.

One of the earliest VFF methods, Value-Decomposition Networks (VDN)\cite{sunehag2018vdn}, assumes that the global joint action-value function is the sum of individual agent values.  This additive assumption simplifies credit assignment but severely limits representational capacity, failing to capture nonlinear agent interactions. QMIX~\cite{rashid2018qmix} extends VDN by introducing a nonlinear mixing network, thereby enhancing expressiveness. QMIX imposes a constraint that requires $Q_{\mathrm{tot}}$ to be a monotonically non-decreasing function with respect to each individual $Q_i$. Although more expressive than VDN, this constraint still restricts the representation of tasks with inherently non-monotonic value functions.

To overcome the expressiveness limitations imposed by the monotonicity constraint, subsequent research has explored two main directions.
The first focuses on enhancing the mixing network. WQMIX~\cite{rashid2020wqmix} introduces a weighting mechanism in the projection step of monotonic value decomposition, emphasizing more promising joint actions and biasing learning toward better solutions. Qatten~\cite{yang2020qatten} applies multi-head attention to adaptively learn interaction weights among agents, enabling a more flexible nonlinear combination.

Another line aims to develop more general frameworks capable of representing the full class of IGM functions. QTRAN~\cite{son2019qtran} introduces a transformed surrogate value function, reformulating the IGM condition as a set of linear constraints and incorporating them via soft regularization. While QTRAN is theoretically more expressive, it often suffers from instability during training and performs suboptimally in practice. QPLEX~\cite{wang2020qplex} introduces a duplex dueling architecture, which factorizes $Q_{\mathrm{tot}}$ into a state-value term $V(s)$ and a joint advantage function $A(s, \mathbf{u})$. The IGM constraint is enforced by imposing monotonicity only on the advantage component. QPLEX theoretically covers the complete IGM function class and achieves strong empirical performance, though at the cost of increased architectural complexity.

Current research in VFF largely focuses on designing increasingly complex network architectures and decomposition schemes to enhance representational capacity, aiming to capture a broader range of tasks while preserving IGM guarantees. In contrast, our work offers a different perspective: we argue that under standard approximately greedy exploration strategies, the learning dynamics may exhibit an implicit stabilizing mechanism. This mechanism can naturally guide unconstrained learning process toward IGM‐consistent optimal solutions.

\section{Conclusion}
This paper challenges the prevailing assumption that structural monotonicity is necessary for multi-agent value decomposition to ensure IGM consistency. We model the non-monotonic learning process as a continuous-time gradient flow and theoretically demonstrate that, under approximately greedy exploration, the dynamics themselves provide an implicit self-correction mechanism. This mechanism drives IGM-inconsistent solutions to unstable saddle points, while establishing IGM-consistent solutions as stable attractors. Experiments on synthetic matrix games, SMAC, and GRF empirically validate our theory, showing that removing monotonicity constraints not only reliably recovers optimal solutions but also consistently outperforms monotonic baselines. Although our formal analysis is developed in the single-state setting, these findings suggest that leveraging the natural dynamics of learning rather than imposing rigid architectural constraints offers a promising direction for designing more expressive, and more effective value-based MARL algorithms.

\section*{Acknowledgment}
This work was supported in part by the UKRI Future Leaders Fellowship under Grant MR/S017062/1 and MR/X011135/1; in part by NSFC under Grant 62376056 and 62076056; in part by the Royal Society under Grant IES/R2/212077; in part by the EPSRC under Grant 2404317; in part by the Kan Tong Po Fellowship (KTP\textbackslash R1\textbackslash 231017); and in part by the Amazon Research Award and Alan Turing Fellowship.

\bibliographystyle{IEEEtran}
\bibliography{IEEEabrv,your_bib}

@article{hernandez2019understanding,
  title={Understanding multi-step deep reinforcement learning: A systematic study of the DQN target},
  author={Hernandez-Garcia, J Fernando and Sutton, Richard S},
  journal={arXiv preprint arXiv:1901.07510},
  year={2019}
}

@article{bernstein2002complexity,
  title={The complexity of decentralized control of Markov decision processes},
  author={Bernstein, Daniel S and Givan, Robert and Immerman, Neil and Zilberstein, Shlomo},
  journal={Mathematics of operations research},
  volume={27},
  number={4},
  pages={819--840},
  year={2002},
  publisher={INFORMS}
}

@book{rummery1994line,
  title={On-line Q-learning using connectionist systems},
  author={Rummery, Gavin A and Niranjan, Mahesan},
  volume={37},
  year={1994},
  publisher={University of Cambridge, Department of Engineering Cambridge, UK}
}

@article{hu2023mo,
  title={MO-MIX: Multi-objective multi-agent cooperative decision-making with deep reinforcement learning},
  author={Hu, Tianmeng and Luo, Biao and Yang, Chunhua and Huang, Tingwen},
  journal={IEEE Transactions on Pattern Analysis and Machine Intelligence},
  volume={45},
  number={10},
  pages={12098--12112},
  year={2023},
  publisher={IEEE}
}

@inproceedings{long2018towards,
  title={Towards optimally decentralized multi-robot collision avoidance via deep reinforcement learning},
  author={Long, Pinxin and Fan, Tingxiang and Liao, Xinyi and Liu, Wenxi and Zhang, Hao and Pan, Jia},
  booktitle={Proc. IEEE International Conference on Robotics and Automation},
  pages={6252--6259},
  year={2018},
}

@article{roesch2020smart,
  title={Smart grid for industry using multi-agent reinforcement learning},
  author={Roesch, Martin and Linder, Christian and Zimmermann, Roland and Rudolf, Andreas and Hohmann, Andrea and Reinhart, Gunther},
  journal={Applied Sciences},
  volume={10},
  number={19},
  pages={6900},
  year={2020},
  publisher={MDPI}
}

@article{chung2020distributed,
  title={Distributed deep reinforcement learning for intelligent load scheduling in residential smart grids},
  author={Chung, Hwei-Ming and Maharjan, Sabita and Zhang, Yan and Eliassen, Frank},
  journal={IEEE Transactions on Industrial Informatics},
  volume={17},
  number={4},
  pages={2752--2763},
  year={2020},
  publisher={IEEE}
}

@article{duan2025distributional,
  title={Distributional Soft Actor-Critic With Three Refinements},
  author={Duan, Jingliang and Wang, Wenxuan and Xiao, Liming and Gao, Jiaxin and Li, Shengbo Eben and Liu, Chang and Zhang, Ya-Qin and Cheng, Bo and Li, Keqiang},
  journal={IEEE Transactions on Pattern Analysis and Machine Intelligence},
  volume={47},
  number={5},
  pages={3935--3946},
  year={2025},
  publisher={IEEE}
}

@article{zhang2024multi,
  title={Multi-agent reinforcement learning for autonomous driving: A survey},
  author={Zhang, Ruiqi and Hou, Jing and Walter, Florian and Gu, Shangding and Guan, Jiayi and R{\"o}hrbein, Florian and Du, Yali and Cai, Panpan and Chen, Guang and Knoll, Alois},
  journal={arXiv preprint arXiv:2408.09675},
  year={2024}
}

@article{chu2019multi,
  title={Multi-agent deep reinforcement learning for large-scale traffic signal control},
  author={Chu, Tianshu and Wang, Jie and Codec{\`a}, Lara and Li, Zhaojian},
  journal={IEEE Transactions on Intelligent Transportation Systems},
  volume={21},
  number={3},
  pages={1086--1095},
  year={2019},
  publisher={IEEE}
}

@article{chen2024end,
  title={End-to-end autonomous driving: Challenges and frontiers},
  author={Chen, Li and Wu, Penghao and Chitta, Kashyap and Jaeger, Bernhard and Geiger, Andreas and Li, Hongyang},
  journal={IEEE Transactions on Pattern Analysis and Machine Intelligence},
  volume={46},
  number={12},
  pages={10164--10183},
  year={2024},
  publisher={IEEE}
}

@article{li2022metadrive,
  title={Metadrive: Composing diverse driving scenarios for generalizable reinforcement learning},
  author={Li, Quanyi and Peng, Zhenghao and Feng, Lan and Zhang, Qihang and Xue, Zhenghai and Zhou, Bolei},
  journal={IEEE Transactions on Pattern Analysis and Machine Intelligence},
  volume={45},
  number={3},
  pages={3461--3475},
  year={2022},
  publisher={IEEE}
}

@article{mahajan2019maven,
  title={Maven: Multi-agent variational exploration},
  author={Mahajan, Anuj and Rashid, Tabish and Samvelyan, Mikayel and Whiteson, Shimon},
  journal={Advances in Neural Information Processing Systems},
  volume={32},
  number={1},
  pages={7613--7624},
  year={2019}
}

@article{yang2020qatten,
  title={Qatten: A general framework for cooperative multiagent reinforcement learning},
  author={Yang, Yaodong and Hao, Jianye and Liao, Ben and Shao, Kun and Chen, Guangyong and Liu, Wulong and Tang, Hongyao},
  journal={arXiv preprint arXiv:2002.03939},
  year={2020}
}

@inproceedings{son2019qtran,
  title={Qtran: Learning to factorize with transformation for cooperative multi-agent reinforcement learning},
  author={Son, Kyunghwan and Kim, Daewoo and Kang, Wan Ju and Hostallero, David Earl and Yi, Yung},
  booktitle={Proc. International Conference on Machine Learning},
  pages={5887--5896},
  year={2019},
}

@inproceedings{wang2020qplex,
title={QPLEX: Duplex Dueling Multi-Agent Q-Learning},
author={Jianhao Wang and Zhizhou Ren and Terry Liu and Yang Yu and Chongjie Zhang},
booktitle={Proc. International Conference on Learning Representations},
year={2021},
pages={1--11},
}

@article{rashid2020wqmix,
  title={Weighted qmix: Expanding monotonic value function factorisation for deep multi-agent reinforcement learning},
  author={Rashid, Tabish and Farquhar, Gregory and Peng, Bei and Whiteson, Shimon},
  journal={Advances in Neural Information Processing Systems},
  volume={33},
  pages={10199--10210},
  year={2020}
}

@inproceedings{samvelyan2019smacv1,
  title={The {StarCraft} multi-agent challenge},
  author={Whiteson, S and Samvelyan, M and Rashid, T and De Witt, CS and Farquhar, G and Nardelli, N and Rudner, TGJ and Hung, CM and Torr, PHS and Foerster, J},
  booktitle={Proc. International Conference on Autonomous Agents and MultiAgent Systems},
  pages={2186--2188},
  year={2019}
}

@inproceedings{lowe2017maddpg,
  title={Multi-agent actor-critic for mixed cooperative-competitive environments},
  author={Lowe, Ryan and Wu, Yi I and Tamar, Aviv and Harb, Jean and Pieter Abbeel, OpenAI and Mordatch, Igor},
  booktitle={Proc. Advances in Neural Information Processing Systems},
  pages={6382-6393},
  year={2017}
}

@inproceedings{tan1993irl,
  title={Multi-agent reinforcement learning: Independent vs. cooperative agents},
  author={Tan, Ming},
  booktitle={Proc. International Conference on Machine Learning},
  pages={330--337},
  year={1993}
}

@inproceedings{smac,
  title={The {StarCraft Multi-Agent Challenge}},
  author={Samvelyan, Mikayel and Rashid, Tabish and Schroeder de Witt, Christian and others},
  booktitle={Proc. International Conference on Autonomous Agents and MultiAgent Systems},
  pages={2186--2188},
  year={2019}
}

@inproceedings{grf,
  title={Google research football: A novel reinforcement learning environment},
  author={Kurach, Karol and Raichuk, Anton and Sta{\'n}czyk, Piotr and others},
  booktitle={Proc. AAAI conference on artificial intelligence},
  volume={34},
  number={04},
  pages={4501--4510},
  year={2020}
}

@article{rnd,
  title={Exploration by random network distillation},
  author={Burda, Yuri and Edwards, Harrison and Storkey, Amos and Klimov, Oleg},
  journal={arXiv preprint arXiv:1810.12894},
  year={2018}
}

@inproceedings{gupta2021uneven,
  title={Uneven: Universal value exploration for multi-agent reinforcement learning},
  author={Gupta, Tarun and Mahajan, Anuj and Peng, Bei and B{\"o}hmer, Wendelin and Whiteson, Shimon},
  booktitle={Proc. International Conference on Machine Learning},
  pages={3930--3941},
  year={2021},
  organization={PMLR}
}

@inproceedings{sunehag2018vdn,
  title={Value-Decomposition Networks For Cooperative Multi-Agent Learning Based On Team Reward},
  author={Sunehag, Peter and Lever, Guy and Gruslys, Audrunas and Czarnecki, Wojciech Marian and Zambaldi, Vinicius and Jaderberg, Max and Lanctot, Marc and Sonnerat, Nicolas and Leibo, Joel Z and Tuyls, Karl and others},
  booktitle={Proc. International Conference on Autonomous Agents and MultiAgent Systems},
  pages={2085--2087},
  year={2018}
}

@inproceedings{rashid2018qmix,
  title={QMIX: Monotonic Value Function Factorisation for Deep Multi-Agent Reinforcement Learning},
  author={Rashid, Tabish and Samvelyan, Mikayel and Schroeder, Christian and Farquhar, Gregory and Foerster, Jakob and Whiteson, Shimon},
  booktitle={Proc. International Conference on Machine Learning},
  pages={4295--4304},
  year={2018},
}

\newpage

\appendix
%!TeX root=main.tex

% Appendix A
\section{Proofs}
\label{app: proofs}

\subsection{Proof of Lemma~\ref{lem:smooth_to_clarke}}
\label{app:lemma_proof}

\begin{lemma}\label{lem:smooth_to_clarke}
Let $\{\tau_k\}_{k=1}^{\infty}$ be a sequence of temperatures such that $\tau_k \downarrow 0$. Suppose that at parameter point $\mathbf{q}$, each agent has a unique greedy action and the mixing function $f_{\mathrm{mix}}$ satisfies the local non-degeneracy condition. Then,
\begin{equation}
\lim_{k \to \infty} \nabla_{\mathbf{q}} \mathcal{L}_{\tau_k}(\mathbf{q}) \;\in\; \partial^{\mathrm{C}}_{\mathbf{q}} \mathcal{L}_0(\mathbf{q}),
\end{equation}
where $\partial^{\mathrm{C}}$ denotes the Clarke generalized gradient, and $\mathcal{L}_0$ corresponds to the limiting loss as $\tau \to 0$ in the softmax policy $\mu_\tau$.
\end{lemma}

\begin{proof}

For any fixed $\tau>0$, define the policy of agent $i$ as
\begin{equation}
\pi_i^\tau(a_i;\mathbf q)
=\frac{\exp(Q_i(a_i)/\tau)}{\sum_b \exp(Q_i(b)/\tau)}.
\end{equation}
$\pi_i^\tau$ is infinitely differentiable with respect to $\mathbf q$. Consequently, the joint policy 
\begin{equation}
\mu_\tau(\boldsymbol{a};\mathbf q) \;=\; \prod_i \pi_i^\tau(a_i;\mathbf q),
\end{equation}
and the corresponding loss function 
\begin{equation}
\mathcal L_{\tau}(\mathbf q) \;=\; \sum_{\boldsymbol{a}} \mu_\tau(\boldsymbol{a};\mathbf q) \big(y - Q_{\mathrm{tot}}\big)^2
\end{equation}
are both smooth and differentiable.  

In the limit $\tau \to 0$, since each agent's greedy action $g_i(\mathbf q)$ is unique, we have
\begin{equation}
\pi_i^\tau(g_i(\mathbf q); \mathbf q) \to 1, \quad 
\pi_i^\tau(a_i; \mathbf q) \to 0\quad (a_i \neq g_i(\mathbf q)).
\end{equation}
Hence, $\mu_\tau(\boldsymbol{a};\mathbf q)$ converges pointwise to the deterministic greedy policy.

The Clarke generalized gradient $\partial^{\mathrm{C}} f(\mathbf q)$ of a non-smooth function $f$ is defined as the convex hull of all limit points of gradients of smooth approximations converging to $f$:
\begin{equation}
\partial^C f(\mathbf q) \;=\; \mathrm{conv}\left\{ \lim_{k \to \infty} \nabla f_k(\mathbf q) \;\middle|\; f_k \to f,~ f_k \text{ smooth} \right\}.
\end{equation}
If a sequence of smooth functions $\{f_k\}$ converges pointwise to $f$ and has uniformly bounded Lipschitz constants, then
\begin{equation}
\lim_{k \to \infty} \nabla f_k(\mathbf q) \;\in\; \partial^C f(\mathbf q).
\end{equation}
This property directly connects the gradients of smooth approximations to the Clarke generalized gradient of the limiting non-smooth function.

We now apply this result to our problem.
Define $f_k(\mathbf q) = \mathcal L_{\tau_k}(\mathbf q)$, where $\tau_k \downarrow 0$ and the Lipschitz constants of $\{\mathcal L_{\tau_k}\}$ remain uniformly bounded. Clearly, $f_k$ converges pointwise to
\begin{equation}
f_0(\mathbf q)
= \sum_{\boldsymbol{a}} \mu_{\mathrm{greedy}}(\boldsymbol{a}; \mathbf q) \big(y - Q_{\mathrm{tot}}\big)^2,
\end{equation}
which corresponds precisely to the greedy policy loss. By the limit transfer theorem (Rockafellar and Wets, \emph{Variational Analysis}, Theorem~9.13), we obtain
\begin{equation}
\lim_{k \to \infty} \nabla f_k(\mathbf q)
= \lim_{k \to \infty} \nabla_{\mathbf q} \mathcal L_{\tau_k}(\mathbf q)
\;\in\; \partial^C f_0(\mathbf q).
\end{equation}

This completes the proof.

\end{proof}

\subsection{Proof of Theorem \ref{thm:2}}

\setcounter{theorem}{1}
\begin{theorem}[Stability of IGM-Consistent Fixed Points]\label{thm:2}
Suppose the following conditions hold:

(H1) For every $\mathbf{q} \in \mathcal{M}$, each agent's greedy action $\arg\max_{a_i} Q_i(a_i)$ is unique.

(H2) The Jacobian of the mixing function $f_{\mathrm{mix}}$ has full rank on the normal subspace of $\mathcal{M}$ (local non-degeneracy).

(H3) The softmax temperature $\tau > 0$ is sufficiently small such that the induced policy is dominated by greedy actions.

Then for any zero-loss fixed point $\mathbf{q}^* \in \mathcal{M}$ that satisfies IGM consistency (i.e., $\mathbf{g}(\mathbf{q}^*) = \mathbf{u}^*$), the Hessian $H_\tau(\mathbf{q}^*)$ is positive definite on the normal subspace $(T_{\mathbf{q}^*} \mathcal{M})^\perp$.
\end{theorem}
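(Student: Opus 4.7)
The plan is to establish positive definiteness of $H_\tau(\mathbf{q}^*)$ on the normal subspace $(T_{\mathbf{q}^*}\mathcal{M})^\perp$ by evaluating its quadratic form on a spanning set of perturbations, then exploiting IGM consistency together with softmax concentration in the low-$\tau$ regime. The first step is to derive a closed form for the Hessian at a zero-loss fixed point. Differentiating $\mathcal{L}_\tau = \sum_{\boldsymbol{a}} \mu_\tau(\boldsymbol{a};\mathbf{q})\,\delta(\boldsymbol{a};\mathbf{q})^2$ twice with respect to $\mathbf{q}$ and evaluating at $\mathbf{q}^*\in\mathcal{M}$, the identity $\delta(\boldsymbol{a};\mathbf{q}^*)\equiv 0$ eliminates every term carrying a factor of $\delta$ or $\delta^2$. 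This is the decisive simplification: although $\nabla_{\mathbf{q}}\mu_\tau$ introduces $1/\tau$ factors via the softmax score $\partial_{Q_i(c)}\log\pi_i^\tau(a_i) = (1/\tau)(\mathbf{1}[a_i=c]-\pi_i^\tau(c))$, every potentially divergent contribution enters multiplied by $\delta$ and so vanishes on $\mathcal{M}$. For the contrast direction $\mathbf{v}=\mathbf{e}_{i,u_i'}-\mathbf{e}_{i,u_i^*}$ a careful rearrangement then reproduces the explicit expression in Equation~\eqref{eq:hessian_form_stable}.

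Next I would characterize the normal subspace. By (H2), the gradients $\{\nabla_{\mathbf{q}} Q_{\mathrm{tot}}(\boldsymbol{a};\mathbf{q}^*)\}_{\boldsymbol{a}}$ span $(T_{\mathbf{q}^*}\mathcal{M})^\perp$, and the contrast perturbations $\{\mathbf{e}_{i,u_i'}-\mathbf{e}_{i,u_i^*} : i\in[N],\,u_i'\neq u_i^*\}$ form a natural basis of directions that attempt to flip some agent's greedy action. For each such $\mathbf{v}$ I would apply Equation~\eqref{eq:hessian_form_stable} and isolate the dominant summand. Under (H3), $\mu_\tau(\cdot;\mathbf{q}^*)$ concentrates on the greedy joint action, which by IGM consistency equals $\mathbf{u}^*$; at $\boldsymbol{a}=\mathbf{u}^*$ the bracketed score-function difference evaluates to $(0-\pi_i^\tau(u_i'))-(1-\pi_i^\tau(u_i^*))\to -1$, and its square is bounded below by a positive constant. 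Since every remaining summand is non-negative, the quadratic form is strictly positive; extending by linearity across this basis yields positive definiteness on the full normal subspace, with (H1) ensuring that the concentration argument applies uniformly across agents.

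The main obstacle I anticipate is the bookkeeping in the Hessian derivation of step one: twofold application of the product rule to $\mu_\tau\delta^2$ generates several cross terms mixing $\nabla\mu_\tau$, $\nabla\delta$, and higher-order pieces, and one must verify that every divergent $1/\tau$ piece is annihilated by a $\delta$-factor on $\mathcal{M}$ before the concentration argument of step two can be safely invoked. IGM consistency enters only at the very end — in choosing the dominant joint action — which clarifies the contrast with Theorem~\ref{thm:igm_saddle}: without IGM the dominant summand would sit at a suboptimal vertex and the associated reward gap would flip the sign of the surviving term, producing a negative eigenvalue instead of positive curvature.
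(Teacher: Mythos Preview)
Your overall strategy—twice-differentiate $\mathcal{L}_\tau$, exploit $\delta\equiv 0$ on $\mathcal{M}$ to kill all policy-gradient cross terms, then verify positivity along the contrast directions $\mathbf{e}_{i,u_i'}-\mathbf{e}_{i,u_i^*}$—is sound and close to what the paper does. But there is an internal inconsistency in your step one: you correctly argue that every term carrying a $1/\tau$ factor (arising from $\nabla_{\mathbf{q}}\mu_\tau$) is multiplied by $\delta$ and hence vanishes on $\mathcal{M}$, yet you then claim to recover Equation~\eqref{eq:hessian_form_stable}, which itself carries a $1/\tau$ prefactor and contains softmax scores rather than $\nabla_{\mathbf{q}}Q_{\mathrm{tot}}$. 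The honest residue of your differentiation is the Gauss--Newton form
\[
H_\tau(\mathbf{q}^*)\;=\;2\sum_{\boldsymbol{a}}\mu_\tau(\boldsymbol{a};\mathbf{q}^*)\,\nabla_{\mathbf{q}}Q_{\mathrm{tot}}(\boldsymbol{a};\mathbf{q}^*)\,\nabla_{\mathbf{q}}Q_{\mathrm{tot}}(\boldsymbol{a};\mathbf{q}^*)^\top,
\]
with no $1/\tau$ and no score functions; note also that \eqref{eq:hessian_form_stable} does not depend on $f_{\mathrm{mix}}$ at all, which should have raised a flag.

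This Gauss--Newton form is precisely what the paper's full appendix proof arrives at, though phrased as a Taylor expansion of $\mathcal{L}_\tau(\mathbf{q}^*+\epsilon\mathbf{v})$ rather than a direct Hessian computation: the dominant contribution is $2\bigl(\nabla_{\mathbf{q}}Q_{\mathrm{tot}}(\mathbf{g}';\mathbf{q}^*)^\top\mathbf{v}\bigr)^2$, where $\mathbf{g}'$ is the perturbed greedy joint action, and strict positivity follows immediately from (H2). Your softmax-concentration argument in step two is therefore redundant for this theorem—the Gauss--Newton matrix is manifestly PSD, $\mu_\tau$ has full support, and (H2) upgrades semidefiniteness to definiteness on $(T_{\mathbf{q}^*}\mathcal{M})^\perp$ without any low-temperature analysis. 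The concentration and reward-gap mechanism you anticipate becomes essential only in Theorem~\ref{thm:igm_saddle}, where the sign of the surviving curvature genuinely hinges on which joint action the policy favors.
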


\begin{proof}
To establish the local stability of IGM-consistent zero-loss points, we demonstrate that for any perturbation direction $\mathbf{v}$ attempting to change the greedy action from $\mathbf{u}^*$ to some suboptimal action $\mathbf{g}'$, the quadratic form
\begin{equation}
    \mathbf{v}^\top H_\tau(\mathbf{q}^*) \mathbf{v}
\end{equation}
is strictly positive.

Without loss of generality, consider agent~1 and a suboptimal action $u'_1 \neq u_1^*$. Define the perturbation direction as
\begin{equation}
    \mathbf{v} = \mathbf{e}_{1u'_1} - \mathbf{e}_{1u_1^*},
\end{equation}
and examine the perturbed parameter vector
\begin{equation}
    \mathbf{q}(\epsilon) = \mathbf{q}^* + \epsilon \mathbf{v},
\end{equation}
for sufficiently small $\epsilon > 0$.

Since $\mathbf{q}^*$ is a stationary point with zero gradient, a second-order Taylor expansion of the loss function $\mathcal{L}_\tau$ around $\mathbf{q}^*$ yields
\begin{equation}
\mathcal{L}_\tau(\mathbf{q}(\epsilon)) 
= \frac{\epsilon^2}{2}\mathbf{v}^\top H_\tau(\mathbf{q}^*)\mathbf{v} + \mathcal{O}(\epsilon^3).
\end{equation}

Thus, proving local stability reduces to showing that 
$\mathbf{v}^\top H_\tau(\mathbf{q}^*)\mathbf{v} > 0$, 
which would imply the perturbed loss increases quadratically with respect to $\epsilon$ along any direction that attempts to replace the greedy action with a suboptimal one.

Under the low-temperature limit ($\tau \to 0$), the perturbed greedy action of agent~1 switches from $u^*_1$ to $u'_1$, leading to a new joint greedy action $\mathbf{g}' = (u'_1, u^*_2, \dots, u^*_N)$. The loss is then dominated by the error associated with $\mathbf{g}'$:
\begin{equation}
\mathcal{L}_\tau(\mathbf{q}(\epsilon)) \approx 
\left[y(\mathbf{g}') - Q_{\mathrm{tot}}(\mathbf{g}'; \mathbf{q}(\epsilon))\right]^2.
\end{equation}

Since $\mathbf{q}^* \in \mathcal{M}$, we have $Q_{\mathrm{tot}}(\mathbf{g}';\mathbf{q}^*) = y(\mathbf{g}')$. Using a first-order Taylor expansion around $\mathbf{q}^*$, we obtain
\begin{equation}
Q_{\mathrm{tot}}(\mathbf{g}'; \mathbf{q}(\epsilon)) \approx 
y(\mathbf{g}') + \epsilon \cdot \nabla_{\mathbf{q}} Q_{\mathrm{tot}}(\mathbf{g}';\mathbf{q}^*)^\top \mathbf{v}.
\end{equation}

Substituting this approximation into the loss expression, we get
\begin{equation}
\mathcal{L}_\tau(\mathbf{q}(\epsilon)) 
\approx \epsilon^2 \left(\nabla_{\mathbf{q}} Q_{\mathrm{tot}}(\mathbf{g}';\mathbf{q}^*)^\top \mathbf{v}\right)^2.
\end{equation}

Thus, by definition of the Hessian, we have
\begin{equation}
\mathbf{v}^\top H_\tau(\mathbf{q}^*)\mathbf{v} 
= 2\left(\nabla_{\mathbf{q}} Q_{\mathrm{tot}}(\mathbf{g}';\mathbf{q}^*)^\top \mathbf{v}\right)^2.
\end{equation}

Finally, by assumption (H2), the local non-degeneracy condition implies
\begin{equation}
\nabla_{\mathbf{q}} Q_{\mathrm{tot}}(\mathbf{g}';\mathbf{q}^*)^\top \mathbf{v} \neq 0.
\end{equation}
Hence, we conclude
\begin{equation}
\mathbf{v}^\top H_\tau(\mathbf{q}^*)\mathbf{v} > 0,
\end{equation}
proving the strict positive definiteness of the Hessian along any action-switching direction, and thereby establishing the local stability of the IGM-consistent fixed point $\mathbf{q}^*$.
\end{proof}

\subsection{Proof of Theorem \ref{thm:3}}

\begin{theorem}[Instability of IGM-Inconsistent Fixed Points]\label{thm:3}
Under the same assumptions (H1) -- (H3) as in Theorem~\ref{thm:3}, consider a zero-loss fixed point $\mathbf{q}^* \in \mathcal{M}$ that violates IGM consistency (i.e., $\mathbf{g}(\mathbf{q}^*) \neq \mathbf{u}^*$).
Then there exists an agent $i$ and a perturbation direction
\begin{equation}
\mathbf{v} = \mathbf{e}_{i, u_i^*} - \mathbf{e}_{i, g_i(\mathbf{q}^*)}
\end{equation}
such that
\begin{equation}
\mathbf{v}^\top H_\tau(\mathbf{q}^*) \mathbf{v}
\approx -\frac{2}{\tau} \left[ y(\mathbf{u}^*) - y(\mathbf{g}(\mathbf{q}^*)) \right] < 0,
\end{equation}
indicating that $H_\tau(\mathbf{q}^*)$ has a negative eigenvalue, and $\mathbf{q}^*$ is a structurally unstable saddle point.
\end{theorem}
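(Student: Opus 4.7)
The plan is to mirror the Taylor-expansion argument of Theorem~\ref{thm:2} but along a direction that attempts to ``repair'' the IGM violation. Because $\mathbf{g}(\mathbf{q}^*) \neq \mathbf{u}^*$, there exists an agent $i$ with $g_i(\mathbf{q}^*) \neq u_i^*$. I would set $\mathbf{v} = \mathbf{e}_{i,u_i^*} - \mathbf{e}_{i,g_i(\mathbf{q}^*)}$ and $\mathbf{q}(\epsilon) = \mathbf{q}^* + \epsilon\mathbf{v}$. Since $\mathbf{q}^*\in\mathcal{M}$ is a zero-loss equilibrium of the gradient flow, both the value and the gradient of $\mathcal{L}_\tau$ vanish at $\epsilon=0$, so $\mathcal{L}_\tau(\mathbf{q}(\epsilon)) = \tfrac{\epsilon^2}{2}\mathbf{v}^\top H_\tau(\mathbf{q}^*)\mathbf{v} + O(\epsilon^3)$, and the sign of the quadratic form is determined by the leading behavior of the loss under this perturbation.

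The next step is to track how $\mathbf{v}$ redistributes the softmax mass. A direct computation yields $\partial_{q_{jk}}\log\mu_\tau(\boldsymbol{a};\mathbf{q}) = \tau^{-1}(\mathbf{1}[a_j=k] - \pi_j^\tau(k))$, so under $\mathbf{q}(\epsilon)$ the log-ratio $\log(\pi_i^\tau(u_i^*)/\pi_i^\tau(g_i(\mathbf{q}^*)))$ changes by $2\epsilon/\tau$. Coupling $\epsilon$ to $\tau$, the bulk of the softmax weight migrates from $\mathbf{g}(\mathbf{q}^*)$ to the new greedy joint action $\mathbf{g}' = (g_1(\mathbf{q}^*),\dots,u_i^*,\dots,g_N(\mathbf{q}^*))$. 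Iterating coordinate by coordinate over every agent at which IGM currently fails, the argument eventually reaches $\mathbf{u}^*$; because $\mathbf{u}^*$ is the unique global optimum, $y(\mathbf{u}^*) > y(\mathbf{g}(\mathbf{q}^*))$. Weighting the squared residuals by the shifted softmax distribution, the dominant contribution to the second derivative of $\mathcal{L}_\tau$ along $\mathbf{v}$ is an $O(\tau^{-1})$ term proportional to the negative of this reward gap, producing the stated asymptotic $-(2/\tau)[y(\mathbf{u}^*) - y(\mathbf{g}(\mathbf{q}^*))]$.

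The hard part, and where I would invest the most care, is justifying the negative $O(\tau^{-1})$ scaling rigorously. At an exact zero-loss point a naive computation of $\nabla^2\mathcal{L}_\tau$ only produces the positive semidefinite term $2\sum_{\boldsymbol{a}}\mu_\tau\,\nabla Q_{\mathrm{tot}}\,\nabla Q_{\mathrm{tot}}^\top$, because both $(y-Q_{\mathrm{tot}})$ and $\nabla(y-Q_{\mathrm{tot}})^2$ vanish on $\mathcal{M}$ and annihilate precisely the softmax-derivative terms that carry the $1/\tau$ factor. Recovering the claimed negative curvature therefore requires moving outside the classical smooth Hessian: one must couple the perturbation scale to $\tau$ (taking $\epsilon\asymp\tau$ rather than $\epsilon\downarrow 0$ with $\tau$ fixed) so that the softmax transition appears at leading order in $\epsilon$, and interpret $H_\tau(\mathbf{q}^*)$ via Lemma~\ref{lem:smooth_to_clarke} as a representative of the Clarke generalized Hessian of the limiting greedy loss. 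Once this limit is set up, assumption (H2) guarantees that the relevant directional derivatives of $Q_{\mathrm{tot}}$ are nondegenerate, and the sign computation above produces the negative eigenvalue and hence the saddle-point conclusion for $\mathbf{q}^*$.
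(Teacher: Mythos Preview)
Your overall strategy---pick the single-coordinate ``repair'' direction $\mathbf{v}=\mathbf{e}_{i,u_i^*}-\mathbf{e}_{i,g_i(\mathbf{q}^*)}$, Taylor-expand $\mathcal{L}_\tau$ around $\mathbf{q}^*$, and track how the softmax mass migrates from $\mathbf{g}(\mathbf{q}^*)$ toward the corrected joint action---is exactly the route the paper takes. The paper's appendix proof simply names the perturbation, asserts without derivation that in the low-temperature regime $\mathbf{v}^\top H_\tau(\mathbf{q}^*)\mathbf{v}\approx -\tfrac{C}{\tau}[y(\mathbf{g}')-y(\mathbf{g}(\mathbf{q}^*))]$, and then invokes $y(\mathbf{g}')>y(\mathbf{g}(\mathbf{q}^*))$ to conclude.

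Where you diverge from the paper is in rigor, not method. Your third paragraph identifies a point the paper's proof does not address at all: at any $\mathbf{q}^*\in\mathcal{M}$ the residuals $y-Q_{\mathrm{tot}}$ vanish identically, so the only surviving term in the smooth Hessian is $2\sum_{\boldsymbol{a}}\mu_\tau\,\nabla Q_{\mathrm{tot}}\nabla Q_{\mathrm{tot}}^\top\succeq 0$, and the $1/\tau$ policy-derivative contributions are annihilated. The paper never confronts this; it jumps straight to the asserted $-C/\tau$ formula. Your proposed resolution---couple $\epsilon\asymp\tau$ so that the softmax transition enters at leading order, and read $H_\tau$ as a generalized second-order object in the $\tau\downarrow 0$ limit---is a more honest account of what is actually being computed than anything in the paper, though note that Lemma~\ref{lem:smooth_to_clarke} only controls first-order Clarke subgradients, so elevating it to a Hessian-level statement would still require additional work.

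One minor mismatch: the theorem statement writes the gap as $y(\mathbf{u}^*)-y(\mathbf{g}(\mathbf{q}^*))$, whereas the paper's own appendix proof uses the single-flip action $\mathbf{g}'=(u_i^*,g_{-i}(\mathbf{q}^*))$ and asserts $y(\mathbf{g}')>y(\mathbf{g}(\mathbf{q}^*))$. Your ``iterate coordinate by coordinate until reaching $\mathbf{u}^*$'' bridges toward the stated formula, but a single direction $\mathbf{v}$ only moves one coordinate, so the quadratic form along that particular $\mathbf{v}$ should see $\mathbf{g}'$, not $\mathbf{u}^*$. The paper is internally inconsistent on this point; your instinct to flag it is sound, but be aware that neither the single-flip inequality $y(\mathbf{g}')>y(\mathbf{g}(\mathbf{q}^*))$ nor the jump to $\mathbf{u}^*$ is justified without an extra coordinate-wise-improvement assumption.
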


\begin{proof}
To prove instability, it suffices to identify a perturbation direction $\mathbf{v}$ along which the Hessian exhibits negative curvature. Specifically, we aim to show that there exists at least one agent $i$ for which perturbing the greedy action from the current suboptimal choice $g_i(\mathbf{q}^*)$ to the globally optimal choice $u_i^*$ results in a strictly negative quadratic form:
\begin{equation}
    \mathbf{v}^\top H_\tau(\mathbf{q}^*) \mathbf{v} < 0.
\end{equation}

Since $\mathbf{g}(\mathbf{q}^*) \neq \mathbf{u}^*$, at least one agent's greedy action must differ from the globally optimal action. Without loss of generality, consider agent~$i$ with $g_i(\mathbf{q}^*) \neq u_i^*$, and define the perturbation direction as
\begin{equation}
    \mathbf{v} = \mathbf{e}_{i, u_i^*} - \mathbf{e}_{i, g_i(\mathbf{q}^*)}.
\end{equation}
This perturbation attempts to adjust the greedy action of agent~$i$ toward its globally optimal action $u_i^*$, yielding a corrected joint action 
\begin{equation}
\mathbf{g}' = (u_i^*, g_{-i}(\mathbf{q}^*)).
\end{equation}

In the low-temperature limit ($\tau \to 0$), the loss curvature along this direction is dominated by the payoff difference induced by switching actions. Specifically, a second-order Taylor expansion around $\mathbf{q}^*$ gives
\begin{equation}
    \mathbf{v}^\top H_\tau(\mathbf{q}^*) \mathbf{v} \approx -\frac{C}{\tau}\left[y(\mathbf{g}') - y(\mathbf{g}(\mathbf{q}^*))\right],
\end{equation}
where $C > 0$ is a positive constant depending on the policy gradient terms of the softmax function.

Since $\mathbf{g}(\mathbf{q}^*)$ is not globally optimal, we have
\begin{equation}
y(\mathbf{g}') > y(\mathbf{g}(\mathbf{q}^*)),
\end{equation}
implying
\begin{equation}
\mathbf{v}^\top H_\tau(\mathbf{q}^*) \mathbf{v} < 0.
\end{equation}
Thus, there exists a perturbation direction along which the Hessian has negative curvature, demonstrating the existence of a negative eigenvalue. Equivalently, the corresponding Jacobian of the learning dynamics possesses a positive eigenvalue, establishing that the IGM-inconsistent fixed point $\mathbf{q}^*$ is a structurally unstable saddle point.

\end{proof}

% Appendix B
\section{Implementation Details}
\label{app:impl}
\subsection{Model Architecture}
Our model's architecture, as illustrated in Figure \ref{fig:qmix}, is based on the QMIX \citep{rashid2018qmix} framework and is composed of a set of decentralized agent networks and a centralized mixing network.

\paragraph{Agent Networks}
Each agent possesses an individual recurrent network responsible for approximating its local action-value function, $Q_i(\tau^i, a_t^i)$. This network processes the agent's current local observation ($o_t^i$) and its previous action ($a_{t-1}^i$). The input is first passed through a Multi-Layer Perceptron (MLP), and the output is then fed into a Gated Recurrent Unit (GRU) which also receives the previous hidden state $h_{t-1}^i$. The GRU's new hidden state, $h_t^i$, which encodes the agent's action-observation history $\tau^i$, is then passed to a final MLP to produce the Q-values for all of the agent's discrete actions. During execution, actions are chosen from these local Q-values using an $\epsilon$-greedy policy.

\paragraph{Mixing Network}
The mixing network is a centralized module that combines the individual Q-values from all agent networks, $\{Q_1(\tau^1, a_t^1), \dots, Q_n(\tau^n, a_t^n)\}$, into a single global joint action-value, $Q_{tot}(\tau, a)$. This network also takes the global state $s_t$ as an input to enable state-dependent mixing.

\paragraph{Hypernetwork Structure}
The mixing network utilizes a hypernetwork architecture. It consists of several linear layers that take the global state $s_t$ as input and generate the weights (e.g., $w_1, w_2$) and biases (e.g., $b_1, b_2$) for the main mixing layers. The individual agent Q-values are then fed through these state-generated layers, with an ELU activation function applied between them, to compute the final $Q_{tot}$. For standard QMIX, the weights of the hypernetwork are constrained to be non-negative to enforce the monotonicity constraint $\frac{\partial Q_{tot}}{\partial Q_i} \ge 0$. Our non-monotonic approach, central to this paper, removes this constraint.

\begin{figure}
    \centering
    \includegraphics[width=1.0\linewidth]{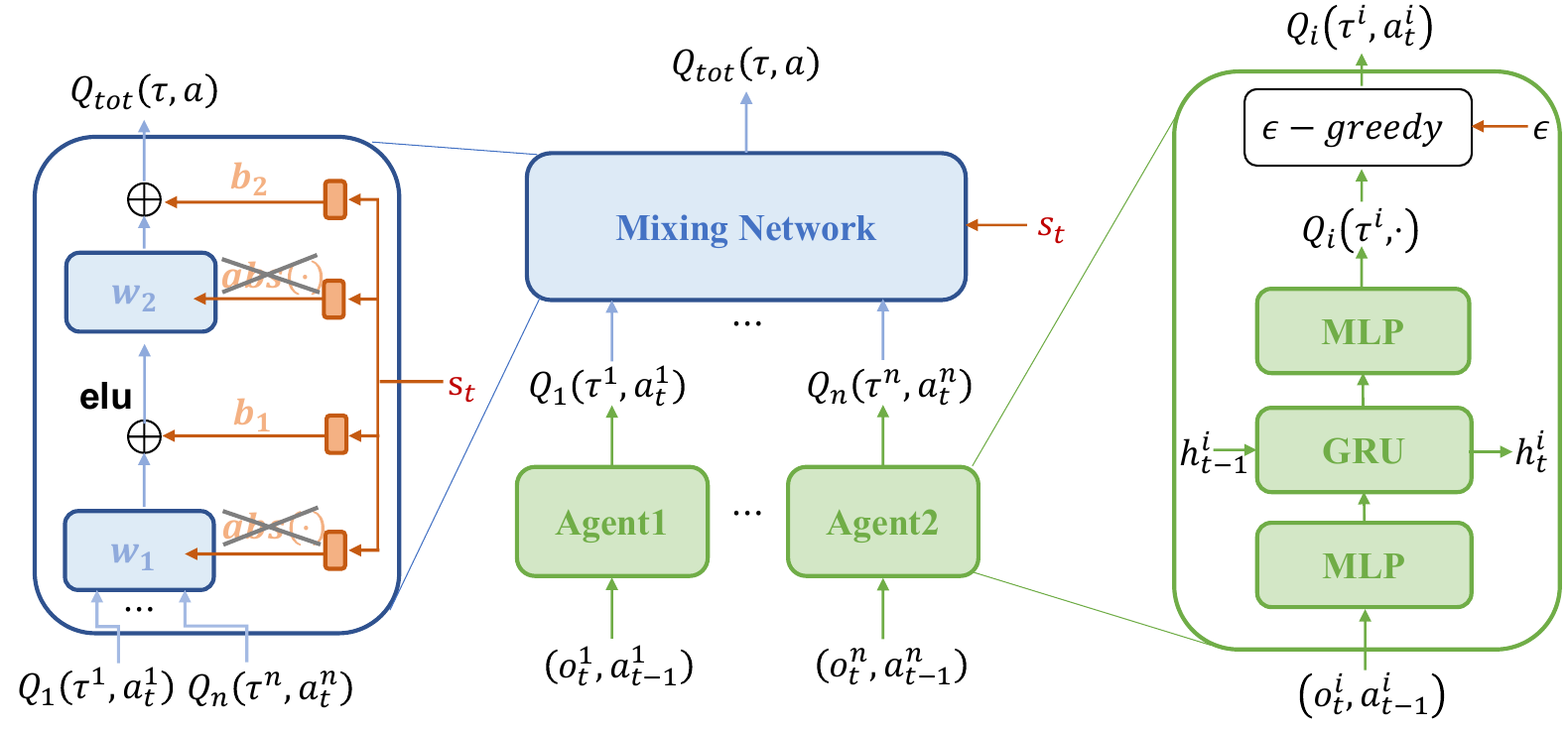}
    \caption{Architecture of our method.}
    \label{fig:qmix}
\end{figure}

\begin{figure}
    \centering
    \includegraphics[width=1.0\linewidth]{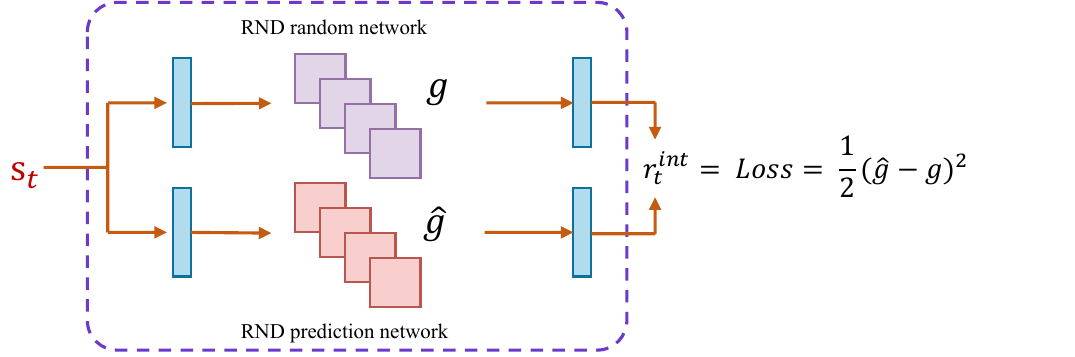}
    \caption{Architecture of RND.}
    \label{fig:rnd}
\end{figure}

\subsection{Intrinsic Reward Generation for Exploration}
To facilitate robust exploration, especially in complex environments with sparse extrinsic rewards, we integrate an intrinsic curiosity mechanism based on Random Network Distillation (RND) \citep{rnd}. As illustrated in Figure \ref{fig:rnd}, the RND module consists of two distinct neural networks:
\begin{itemize}
    \item A \textbf{target network} ($g$), which is initialized randomly and remains fixed throughout training. It takes the current state $s_t$ and maps it to a feature embedding.
    \item A \textbf{predictor network} ($\hat{g}$), which is trained to predict the output of the target network for the same state $s_t$.
\end{itemize}
The prediction error between the outputs of these two networks serves as the intrinsic reward signal. Specifically, the intrinsic reward $r_t^{int}$ is calculated as the mean squared error between the feature embeddings:
\begin{equation}
    r_t^{int} = \frac{1}{2} \| \hat{g}(s_t; \theta_{\text{predictor}}) - g(s_t) \|_2^2
\end{equation}
where $\theta_{\text{predictor}}$ denotes the parameters of the trainable predictor network. This reward is high for novel, unfamiliar states where the predictor network makes large errors, and low for familiar states that have been visited frequently. By adding this curiosity-driven reward to the extrinsic environment reward, we encourage the agents to systematically explore their state space and discover more promising regions of the environment.

\subsection{Pseudocode}

\begin{algorithm}[htbp]
\caption{Training Procedure}
\label{alg:our_method_td_lambda}
\renewcommand{\algorithmicrequire}{\textbf{Initialize:}}
\begin{algorithmic}[1]
    \REQUIRE Agent networks $Q_i(\cdot;\theta_i)$ for $i=1..N$, and mixing network $Q_{tot}(\cdot;\theta_{mix})$
    \REQUIRE Target networks $\theta_i^-, \theta_{mix}^-$ with parameters copied from online networks
    \REQUIRE RND target network $g$ (fixed) and predictor network $\hat{g}(\cdot;\theta_{\text{predictor}})$
    \REQUIRE Replay buffer $D \leftarrow \emptyset$
    
    \FOR{episode = $1, \dots, M$}
        \STATE Initialize environment and get initial state $s_0$
        \FOR{$t=0, \dots, T-1$}
            \FOR{agent $i=1, \dots, N$}
                \STATE Based on observation history $\tau_t^i$, select action $a_t^i$ using an $\epsilon$-greedy policy on $Q_i(\tau_t^i, \cdot; \theta_i)$
            \ENDFOR
            \STATE Execute joint action $\mathbf{a}_t$, observe extrinsic reward $r_t$, next state $s_{t+1}$, and terminated flag $d_{t+1}$
            \STATE Store transition sequence in buffer $D$
            
            \IF{time to update}
                \STATE Sample a random mini-batch of trajectories $B$ from $D$
                \STATE Initialize loss $\mathcal{L} \leftarrow 0$
                \STATE For each trajectory $\tau_j = \{(s_t, \mathbf{a}_t, r_t, d_t)\}_{t=0...T}$ in $B$:
                \STATE \quad // First, calculate total rewards for the trajectory
                \STATE \quad For $t = 0, \dots, T$:
                \STATE \quad \quad $r_{int, t} \leftarrow \| \hat{g}(s_{t+1}; \theta_{\text{predictor}}) - g(s_{t+1}) \|_2^2$
                \STATE \quad \quad $r_{total, t} \leftarrow r_t + \beta \cdot r_{int, t}$
                \STATE \quad For $t = 0, \dots, T$:
                \STATE \quad \quad // Calculate n-step SARSA returns starting from time t
                \STATE \quad \quad For $n=1, \dots, T-t$:
                \STATE \quad \quad \quad $y_t^{(n)} \leftarrow \left( \sum_{k=0}^{n-1} \gamma^k r_{total, t+k} \right) + (1-d_{t+n})\gamma^n Q_{tot}(s_{t+n}, \mathbf{a}_{t+n}; \theta_{mix}^-, \theta_i^-)$
                \STATE \quad \quad // Combine n-step returns to form the TD($\lambda$) target
                \STATE \quad \quad $y_t^{\lambda} \leftarrow (1-\lambda)\sum_{n=1}^{T-t-1} \lambda^{n-1} y_t^{(n)} + \lambda^{T-t-1}y_t^{(T-t)}$
                \STATE \quad \quad $Q_{tot, t} \leftarrow Q_{tot}(s_t, \mathbf{a}_t; \theta_{mix})$
                \STATE \quad \quad $\mathcal{L} \leftarrow \mathcal{L} + (y_t^{\lambda} - Q_{tot, t})^2$
                
                \STATE Update RND predictor network $\theta_{\text{predictor}}$ to minimize $\sum_{j,t} r_{int, t}$
                \STATE Update agent networks $\theta_i$ and mixing network $\theta_{mix}$ by minimizing $\mathcal{L}$
                \STATE Periodically update target networks: $\theta_i^- \leftarrow \theta_i$, $\theta_{mix}^- \leftarrow \theta_{mix}$
            \ENDIF
        \ENDFOR
    \ENDFOR
\end{algorithmic}
\end{algorithm}

The complete training procedure of our proposed method is summarized in Algorithm~\ref{alg:our_method_td_lambda}. 
We first initialize all agent networks, the non-monotonic mixing network, and their corresponding target networks, which are cloned from the initial parameters. 
In parallel, we initialize the RND module, consisting of a fixed random target network and a trainable predictor network, as well as an empty replay buffer for experience storage.

During data collection, agents interact with the environment over full episodes. 
At each timestep, every agent selects an action according to its local action-value estimates using an $\epsilon$-greedy exploration policy. 
The resulting joint action, extrinsic reward, and next state are then stored in the replay buffer as part of the trajectory.

The training phase is executed periodically. We sample a mini-batch of complete trajectories from the replay buffer. For each step within these trajectories, we first compute an intrinsic reward using the RND module's prediction error. This is added to the extrinsic reward to form a total reward. Next, we calculate the target value for the Bellman update. Following our methodology, we use the TD($\lambda$) return, which forms a robust target by calculating a weighted average of all n-step SARSA returns along the trajectory. The final loss is the mean squared error between these TD($\lambda$) targets and the Q-values produced by the online mixing network. This loss is used to update the parameters of both the agent networks and the mixing network via gradient descent. The RND predictor network is separately updated to minimize its prediction error. Finally, the target networks $\theta^-$ are periodically updated by copying the parameters from the online networks $\theta$.

\subsection{Hyperparameters}
The important parameters used in the experiment are presented in Table 1\ref{tab:hyper}. Mini-batch size refers to the number of episodes sampled from the buffer for training. Annual start weight means that the initial value of $\beta$ is 0.5 and reduces to 0.05 over 100k steps.

\begin{table}[htbp]
    \centering
    \caption{Hyperparameters}
    \resizebox{0.8\linewidth}{!}{
    \begin{tabular}{@{}llcll@{}}
    \toprule
    Hyperparameter & Value &  \phantom{} & Hyperparameter & Value \\ 
    \midrule
      Optimizer & Adam &&   Buffer size  &  5000  \\
      Agent learning rate &  1e-3 && RND learning rate & 5e-4 \\
      Mini-batch size  & 128 && Anneal start weight & 0.5 \\
      Anneal step for $\beta$ & 100K && Anneal finish weight & 0.05 \\
      Use learning rate decay & False   \\
    \bottomrule
    \end{tabular}}
    \label{tab:hyper}
\end{table}

\section{Experimental Validation and Ablation Studies}

In this section, we present additional experiments to further substantiate the theoretical results established in the main paper.

\subsection{Matrix Game Results}
\label{sec:appendix_matrix_game}

We provide the full set of results from the matrix-game experiments summarized in the main text. 
Table~\ref{tab:matrix_game} compares the learned value functions produced by our method, QTRAN \citep{son2019qtran}, QMIX\citep{rashid2018qmix}, and QPLEX\citep{wang2020qplex} against the true payoff matrices across two distinct games.

As the results indicate, both our unconstrained method and QTRAN accurately recovered the global payoff matrices. 
In contrast, QMIX and QPLEX, which enforce monotonic constraints, failed to represent the true payoffs, especially in the non-monotonic Game A scenario. 
These findings strongly support our theoretical claim that removing monotonicity constraints enables reliable recovery of IGM-optimal solutions in cases where monotonic approaches inherently fall short.

\begin{table}[htbp]
\centering
\begin{subtable}[t]{0.45\linewidth}
\centering
\begin{tabular}{|c|c|c|c|}
\hline
\diagbox{$u_1$}{$u_2$} & A & B & C \\
\hline
A & 12 & -12 & -12 \\
\hline
B & -12 & 0 & 0 \\
\hline
C & -12 & 0 & 0 \\
\hline
\end{tabular}
\caption{True payoff of matrix game A}
\end{subtable}
\hfill
\begin{subtable}[t]{0.45\linewidth}
\centering
\begin{tabular}{|c|c|c|c|}
\hline
\diagbox{$u_1$}{$u_2$} & A & B & C \\
\hline
A & 12 & 0 & 10 \\
\hline
B & 0 & 0 & 10 \\
\hline
C & 10 & 10 & 10 \\
\hline
\end{tabular}
\caption{True payoff of matrix game B}
\end{subtable}

\begin{subtable}[t]{0.45\linewidth}
\centering
\begin{tabular}{|c|c|c|c|}
\hline
\diagbox{$Q_1$}{$Q_2$} & A(7.4) & B(-4.0) & C(-4.1) \\
\hline
A(8.1) & 12.0 & -12.0 & -12.0 \\
\hline
B(-4.3) & -12.0 & 0.0 & 0.0 \\
\hline
C(-4.3) & -12.0 & 0.0 & 0.0 \\
\hline
\end{tabular}
\caption{Our method on game A}
\end{subtable}
\hfill
\begin{subtable}[t]{0.45\linewidth}
\centering
\begin{tabular}{|c|c|c|c|}
\hline
\diagbox{$Q_1$}{$Q_2$} & A(6.0) & B(-3.7) & C(2.5) \\
\hline
A(6.2) & 12.0 & 0.0 & 10.0 \\
\hline
B(-4.4) & 0.0 & 0.0 & 10.0 \\
\hline
C(1.7) & 10.0 & 10.0 &10.0 \\
\hline
\end{tabular}
\caption{Our method on game B}
\end{subtable}

\begin{subtable}[t]{0.45\linewidth}
\centering
\begin{tabular}{|c|c|c|c|}
\hline
\diagbox{$Q_1$}{$Q_2$} & A(-5.2) & B(0.1) & C(0.1) \\
\hline
A(-5.3) & -12.0 & -12.0 & -12.0 \\
\hline
B(0.1) & -12.0 & 0.0 & -0.1 \\
\hline
C(0.1) & -12.0 & -0.0 & -0.0 \\
\hline
\end{tabular}
\caption{QMIX on game A}
\end{subtable}
\hfill
\begin{subtable}[t]{0.45\linewidth}
\centering
\begin{tabular}{|c|c|c|c|}
\hline
\diagbox{$Q_1$}{$Q_2$} & A(-0.6) & B(-0.6) & C(0.5) \\
\hline
A(-0.6) & 0.1 & 0.1 & 4.1 \\
\hline
B(-0.6) & 0.1 & 0.1 & 4.1 \\
\hline
C(0.5) & 5.6 & 5.6 & 10.0 \\
\hline
\end{tabular}
\caption{QMIX on game B}
\end{subtable}

\begin{subtable}[t]{0.45\linewidth}
\centering
\begin{tabular}{|c|c|c|c|}
\hline
\diagbox{$Q_1$}{$Q_2$} & A(2.0) & B(-8.5) & C(-8.9) \\
\hline
A(1.9) & 12.0 & -17.8 & -17.4 \\
\hline
B(-8.7) & -14.6 & -44.4 & -44.0 \\
\hline
C(-8.8) & -15.1 & -44.6 & -44.9 \\
\hline
\end{tabular}
\caption{QPLEX on game A}
\end{subtable}
\hfill
\begin{subtable}[t]{0.45\linewidth}
\centering
\begin{tabular}{|c|c|c|c|}
\hline
\diagbox{$Q_1$}{$Q_2$} & A(0.4) & B(0.0) & C(-27.1) \\
\hline
A(0.4) & 12.1 & 6.9 & -347.0 \\
\hline
B(0.0) & 5.3 & 0.2 & -353.7 \\
\hline
C(-26.9) & -430.9 & -436.0 & -789.9 \\
\hline
\end{tabular}
\caption{QPLEX on game B}
\end{subtable}

\begin{subtable}[t]{0.45\linewidth}
\centering
\begin{tabular}{|c|c|c|c|}
\hline
\diagbox{$Q_1$}{$Q_2$} & A(6.3) & B(0.2) & C(0.7) \\
\hline
A(6.0) & 12.0 & -12.0 & -12.0 \\
\hline
B(0.1) & -12.0 & 0.0 & 0.0 \\
\hline
C(0.6) & -12.0 & 0.0 & 0.0 \\
\hline
\end{tabular}
\caption{QTRAN on game A}
\end{subtable}
\hfill
\begin{subtable}[t]{0.45\linewidth}
\centering
\begin{tabular}{|c|c|c|c|}
\hline
\diagbox{$Q_1$}{$Q_2$} & A(4.5) & B(3.3) & C(3.6) \\
\hline
A(4.0) & 12.0 & 0.0 & 10.0 \\
\hline
B(2.7) & -0.0 & 0.0 & 10.0 \\
\hline
C(3.4) & 10.0 & 10.0 & 10.0 \\
\hline
\end{tabular}
\caption{QTRAN on game B}
\end{subtable}
\caption{True payoff matrices and estimated value functions for two matrix games. Each column corresponds to one game: the first row shows the ground-truth payoff matrix of Game A and Game B, and each of the remaining rows represents a method's value estimate.}
\label{tab:matrix_game}
\end{table}

\begin{figure*}[!h]
\centering
\includegraphics[width=1.0\textwidth]{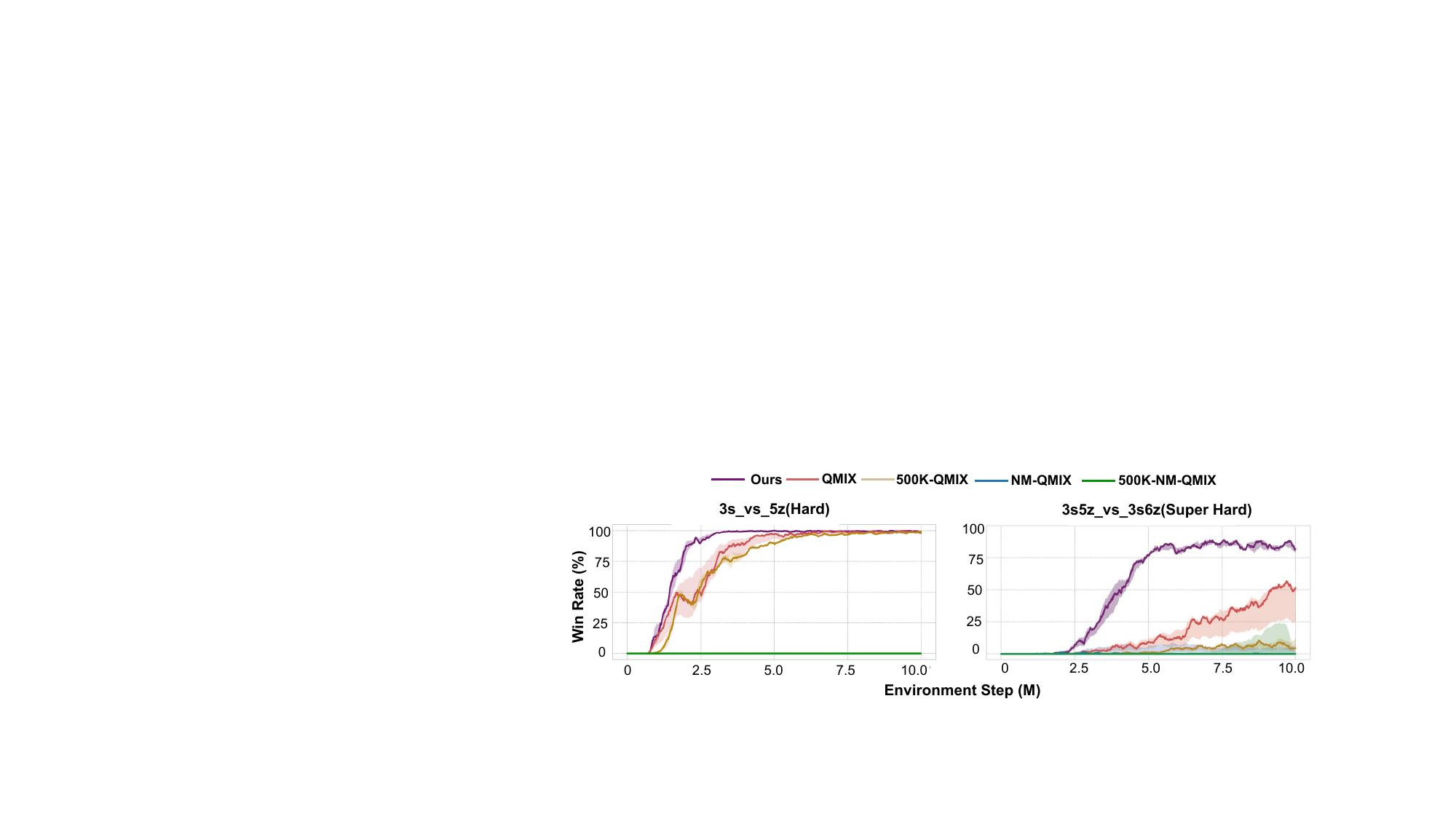}
\caption{Comparisons of test win rate on SMAC maps: \texttt{3s\_vs\_5z}, and \texttt{3s5z\_vs\_3s6z}. The results are averaged over five independent runs, with the 25\%–75\% interquartile range shown as a shaded region.}
\label{fig:ablation_smac_1}
\end{figure*}

\begin{figure*}[!h]
\centering
\includegraphics[width=1.0\textwidth]{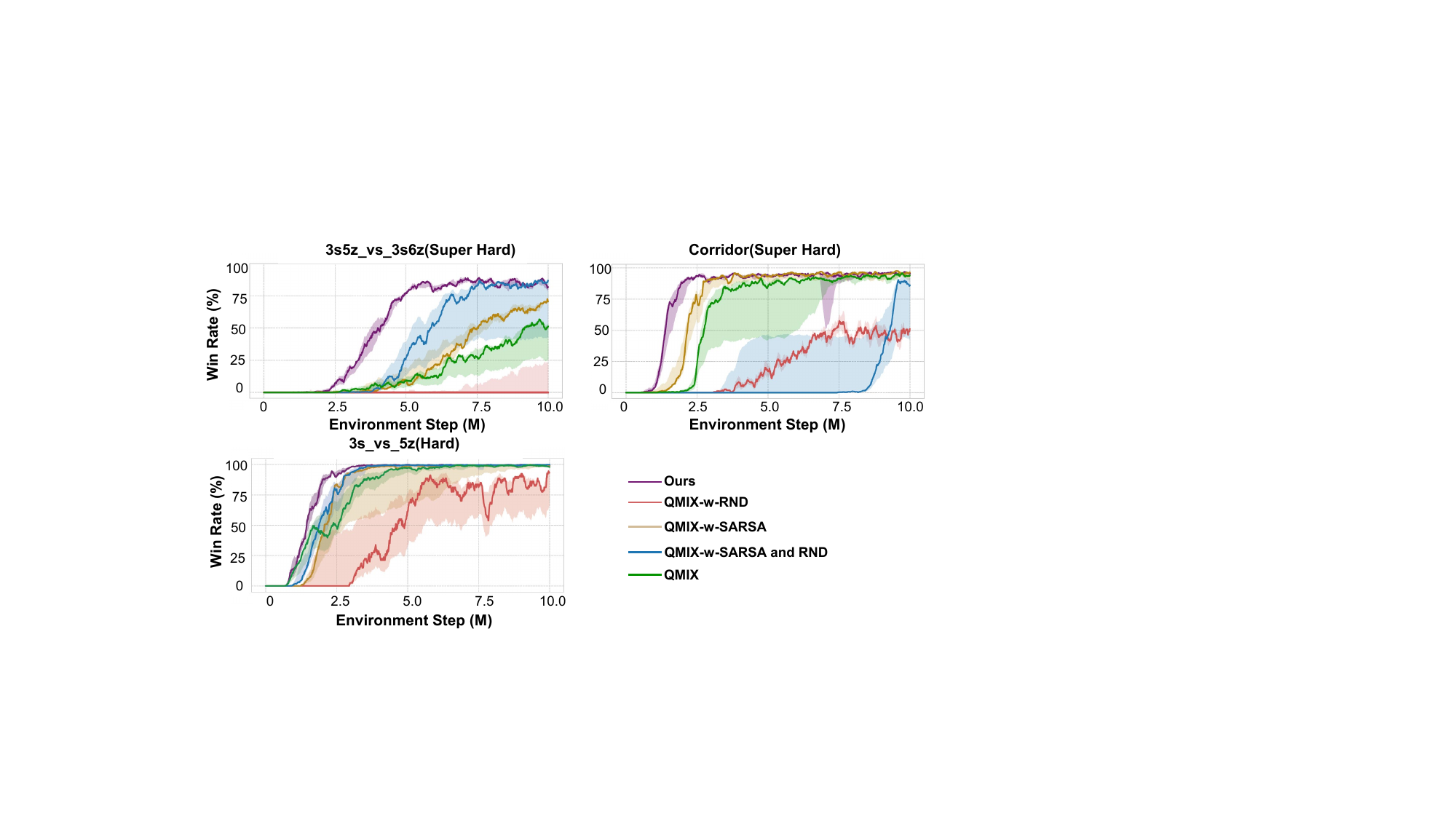}
\caption{Comparisons of test win rate on SMAC maps: \texttt{3s\_vs\_5z}, \texttt{corridor}, and \texttt{3s5z\_vs\_3s6z}. The results are averaged over five independent runs, with the 25\%–75\% interquartile range shown as a shaded region.}
\label{fig:ablation_smac_2}
\end{figure*}

\begin{figure*}[t]
\centering
\includegraphics[width=1.0\textwidth]{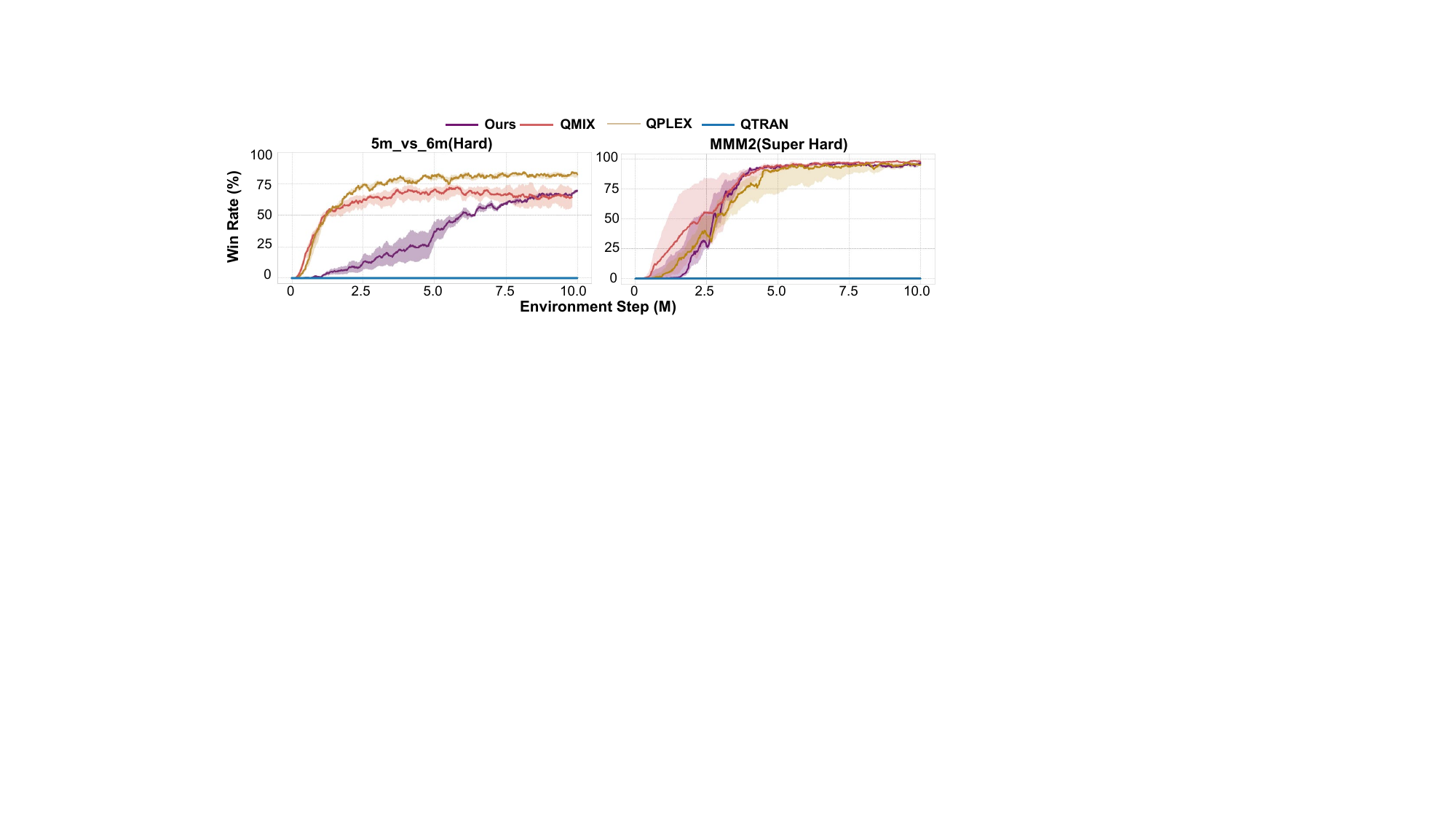}
\caption{Comparisons of test win rate on SMAC maps: \texttt{5m\_vs\_6m} and \texttt{MMM2}. The results are averaged over five independent runs, with the 25\%–75\% interquartile range shown as a shaded region.}
\label{fig:additional}
\end{figure*}

\subsection{Ablation Studies}

We conduct ablation studies to validate the key design choices of our proposed method. 

First, we investigate the effect of directly removing the monotonicity constraint from QMIX, denoted as NM-QMIX. As shown in Figure~\ref{fig:ablation_smac_1}, NM-QMIX performs very poorly on tasks such as \texttt{3s5z\_vs\_3s6z}. This indicates that the monotonicity constraint is indeed beneficial in most cases, despite its limited representational capacity. By default, QMIX uses an exploration annealing schedule of 50k steps; we extend this to 500k but observe no improvement. This suggests that simply increasing $\epsilon$-greedy exploration provides limited benefit. In contrast, our method achieves substantially better performance, which we attribute to the use of a more effective exploration strategy and the SARSA-style update. These findings align with our theoretical analysis: (1) sufficient exploration is crucial for escaping unstable saddle points and ultimately converging to the stable submanifold, and (2) the SARSA-style update, which does not rely on the $\max_{a’}$ operator, provides a more reliable learning signal.

Second, we examine the effect of adding the RND component and the SARSA-style update to the original QMIX. As shown in the Figure~\ref{fig:ablation_smac_2}, neither component consistently improves performance for standard QMIX. The SARSA-style update is specifically designed for non-monotonic factorization; adding it to monotonic QMIX is unnecessary. As for RND, it enhances exploration, but stronger exploration is not always beneficial: it helps our non-monotonic method recover IGM-optimal solutions, yet it tends to slow convergence in QMIX. These results further indicate that with an effective exploration mechanism and an appropriate algorithmic design, removing the monotonicity constraint can indeed lead to improved performance.

\subsection{Failure Cases}

We also observe that our method does not consistently outperform the baselines on a small number of environments, as shown in Figure~\ref{fig:additional}. In the \texttt{5m\_vs\_6m} environment, our method converges more slowly but eventually reaches performance comparable to QMIX. In \texttt{MMM2}, all methods exhibit very similar learning curves, except for QTRAN, and our method converges slightly more slowly than QMIX. Although we do not achieve absolute superiority on every benchmark, the results still demonstrate that multi-agent Q-learning can perform strongly even without imposing any constraints. We believe that further exploration of this unconstrained setting is a promising future direction.

\section{Limitations and Future Work}
\label{app:lim}

We provide several discussions on the limitations of our work and potential directions for future research.

First, our theoretical analysis of learning dynamics is conducted in a simplified setting. Although the empirical results on challenging sequential benchmarks such as SMAC and GRF indicate that the core insights generalize well, a rigorous extension of the stability analysis to the full multi-state Dec-POMDP setting remains an important avenue for future investigation.

We also discuss a possible direction for extending the theoretical analysis to practical settings. For RL based on multi-state MDPs, the overall training objective can be formulated as a weighted expectation of per-state losses. Consequently, the global Hessian can be regarded as the expectation of per-state Hessians. Therefore, the conclusion of Theorem 2 naturally extends to standard MDPs: if the quadratic form on the normal subspace remains positive for IGM-consistent points at each visited state, the expected Hessian will also be positive definite in the normal directions, implying that IGM-consistent points remain asymptotically stable.

Lastly, our approach adopts a SARSA-style update rule, which is theoretically on-policy. For practical reasons, and following prior work~\citep{hernandez2019understanding}, we omit off-policy corrections such as importance sampling. The implications of this omission, particularly the potential distribution shift between the replay buffer and the current policy, are not explicitly captured in our theoretical analysis and warrant further study.

\end{document}